\documentclass[12pt]{article}
\usepackage[letterpaper,left=1in,bottom=1in,top=1in,right=1in]{geometry}

\usepackage[utf8]{inputenc}
\usepackage[T1]{fontenc}
\usepackage[english]{babel}

\usepackage{microtype}
\usepackage{csquotes}

\usepackage{setspace}
\usepackage[en-GB]{datetime2}
\DTMlangsetup[en-GB]{showdayofmonth=false}

\usepackage{titlesec}

\titleformat{\part}[frame]
{\normalfont}
{\filright \footnotesize \enspace PART \thepart \enspace}
{8pt}
{\vspace{20pt} \huge \bfseries \filcenter}

\titlespacing*{\part}{0pt}{-50pt}{10pt}

\usepackage{titletoc}

\usepackage{enumitem}

\usepackage{graphicx}

\usepackage{tikz}
\usetikzlibrary{arrows.meta, positioning}

\usepackage{caption}
\usepackage{subcaption}
\usepackage{float}

\usepackage{xcolor}

\usepackage{amsmath}
\usepackage{amssymb}
\usepackage{mathtools}
\usepackage{amsthm}
\usepackage{thmtools}
\usepackage{etoolbox}

\usepackage{amsfonts}
\usepackage{mathrsfs}
\usepackage{dsfont}
\usepackage{bm}

\usepackage{relsize}
\usepackage{nicefrac}
\usepackage{centernot}

\usepackage{algorithm}

\usepackage{algpseudocode}
\usepackage{etoolbox}

\theoremstyle{plain}
\newtheorem{theorem}{Theorem}[section]
\newtheorem{lemma}[theorem]{Lemma}

\newtheorem{corollary}[theorem]{Corollary}

\theoremstyle{definition}
\newtheorem{definition}[theorem]{Definition}

\theoremstyle{remark}

\AtBeginEnvironment{proof}{\setcounter{localclaim}{0}}



\DeclarePairedDelimiter{\ceil}{\lceil}{\rceil}

\usepackage{tcolorbox}

\tcbset{
    rounded corners,
    colback = white,
    before skip = 0.25cm,
    after skip = 0.5cm,
    boxrule = 2pt,
    arc = 10pt
}

\newtcolorbox{boxtheorem}{
  colback=white,
  colframe=black,
  fonttitle=\bfseries,
  title=Theorem}

\newcommand{\figref}[1]{\hyperref[#1]{Figure~\ref*{#1}}}

\newcommand{\DS}{\mathrm{DS}}

\usetikzlibrary{arrows.meta, positioning, calc}

\renewcommand{\Pr}{\operatorname{{Pr}}}

\newcommand{\listC}{\mathcal C}

\newcommand{\cA}{\mathcal A}

\newcommand{\cD}{\mathcal D}

\newcommand{\cG}{\mathcal G}
\newcommand{\cH}{\mathcal H}

\newcommand{\cX}{\mathcal X}
\newcommand{\cY}{\mathcal Y}

\newcommand{\dexp}{d^{\ell}_{\operatorname{E}}}

\newcommand{\dds}{d^{\ell}_{\operatorname{DS}}}

\newcommand{\avd}{\mathsf{avd}^\ell}
\newcommand{\savd}{\mathsf{savd}^\ell}

\newcommand{\outdeg}{\mathsf{outdeg}^\ell}
\newcommand{\degree}{\mathsf{deg}^\ell}

\newcommand{\Gl}{\mathtt{G}_\ell}

\newcommand{\sigmal}{\sigma^{\ell}}
\newcommand{\mdl}{\mathrm{MD}^{\ell}}

\newcommand{\ole}{\mathrm{E}_{\ell}}
\newcommand{\olds}{\operatorname{DS}_{\ell}}

\newcommand{\shift}{\mathbb{S}}

\usepackage[numbers]{natbib}
\usepackage[framemethod=tikz]{mdframed}
\newmdenv[
  linewidth=0.8pt,
  roundcorner=4pt,
  skipabove=10pt,
  skipbelow=10pt,
  innerleftmargin=8pt,
  innerrightmargin=8pt,
  innertopmargin=0pt,
  innerbottommargin=6pt
]{theoremframe}

\usepackage[colorlinks=true, linkcolor=red]{hyperref}
\hypersetup{
citecolor=[rgb]{0.0,0.3,0.0},
linktocpage=true,
pdftitle={A Sharp Sauer Lemma},
pdfpagemode=FullScreen
}

\usepackage{xurl}

\usepackage[capitalize,noabbrev]{cleveref}

\title{An Optimal Sauer Lemma Over $k$-ary Alphabets}

\author{ 
Steve Hanneke\thanks{Purdue University, USA. Email: \texttt{steve.hanneke@gmail.com}} \and
Qinglin Meng\thanks{Purdue University, USA. Email: \texttt{meng160@purdue.edu}} \and
Shay Moran\thanks{Technion, Israel and Google Research. Email: \texttt{smoran@technion.ac.il}} \and
Amirreza Shaeiri\thanks{Purdue University, USA. Email: \texttt{amirreza.shaeiri@gmail.com}}
}

\begin{document}

\maketitle

\thispagestyle{empty}

\begin{abstract}
The Sauer-Shelah-Perles Lemma is a cornerstone of combinatorics and learning theory, bounding the size of a binary hypothesis class in terms of its Vapnik-Chervonenkis (VC) dimension. For classes of functions over a $k$-ary alphabet, namely the multiclass setting, the Natarajan dimension has long served as an analogue of VC dimension, yet the corresponding Sauer-type bounds are suboptimal for alphabet sizes $k>2$.

In this work, we establish a sharp Sauer inequality for multiclass and list prediction. Our bound is expressed in terms of the Daniely--Shalev-Shwartz (DS) dimension, and more generally with its extension, the list-DS dimension -- the combinatorial parameters that characterize multiclass and list PAC learnability. Our bound is tight for every alphabet size $k$, list size $\ell$, and dimension value, replacing the exponential dependence on $\ell$ in the Natarajan-based bound by the optimal polynomial dependence, and improving the dependence on $k$ as well. Our proof uses the polynomial method. In contrast to the classical VC case, where several direct combinatorial proofs are known, we are not aware of any purely combinatorial proof in the DS setting. This motivates several directions for future research, which are discussed in the paper.

As consequences, we obtain improved sample complexity upper bounds for list PAC learning and for uniform convergence of list predictors, sharpening the recent results of Charikar et al.~(STOC~2023), Hanneke et al.~(COLT~2024), and Brukhim et al.~(NeurIPS~2024).

\end{abstract}

\setcounter{page}{1}

\section{Introduction}

The Sauer--Shelah--Perles Lemma is one of the foundational results of extremal combinatorics and statistical learning theory. It states that any family of binary vectors of length $n$ with VC dimension~$d$ has cardinality at most
\(\sum_{i=0}^d \binom{n}{i}.\) 
This inequality lies at the heart of statistical learning theory, where it underpins the characterization of PAC learnability by the VC dimension. It also plays a central role in applications of VC theory across computer science, discrete mathematics, and probability theory.

A natural question is how this inequality extends beyond the binary setting. When the label space has size $k > 2$, the classical VC dimension no longer captures the learnability. Natarajan introduced a multiclass analogue of the VC dimension in the 1980s, and Sauer-type inequalities in terms of the Natarajan dimension have since been widely used. However, these bounds are not tight for $k > 2$. 
As we discuss in greater detail later, these suboptimal combinatorial bounds propagate to correspondingly loose guarantees on learning rates and sample complexities.

Recent work has established that the correct parameter governing multiclass and list PAC learnability is the Daniely--Shalev-Shwartz (DS) dimension. It is therefore natural to ask whether sharper Sauer-type bounds can be obtained in terms of the DS dimension, and whether such bounds lead to improved quantitative guarantees for learning. In this work we establish a sharp Sauer inequality for $k$-ary hypothesis classes in terms of the DS dimension, and more generally, its list extension, the $\ell$-DS dimension. The resulting bound is tight for all admissible parameter values, and improves previously known bounds based on the Natarajan dimension.

\paragraph{Binary Alphabets.}
We now recall the classical Sauer–Shelah–Perles Lemma more formally, before discussing its multiclass extensions. Let $\mathcal{H} \subseteq \{0,1\}^n$ be a class of binary vectors. 
A set $\{i_1,\dots,i_d\} \subseteq [n]$ of indices is \emph{shattered} by $\mathcal{H}$ if
\[
\{(h(i_1),\dots,h(i_d)) : h \in \mathcal{H}\} = \{0,1\}^d,
\]
that is, if the restriction of $\mathcal{H}$ to $\{i_1,\dots,i_d\}$ realizes the entire $d$-dimensional Boolean cube.
The Vapnik--Chervonenkis (VC) dimension of $\mathcal{H}$ is the maximum size of a shattered set.
The Sauer--Shelah--Perles Lemma states that if $\mathcal{H}$ has VC dimension $d$, then 
\[
|\mathcal{H}| 
\le \sum_{i=0}^{d} \binom{n}{i}.
\]
The lemma was independently proved by Vapnik and Chervonenkis in \cite*{VC71}, Sauer in \cite*{Sau72}, and Shelah in \cite*{She72}; Shelah explicitly credits Micha Perles for an unpublished proof, which motivated the name ``Sauer--Shelah--Perles''.

The Sauer Lemma admits a remarkable variety of proofs. There are direct inductive arguments, combinatorial proofs via shifting and symmetrization, and linear-algebraic proofs. Gil Kalai is quoted as describing the lemma as unusually accessible: \emph{whichever natural approach one takes, a proof seems to emerge}. We will return to this point when discussing our extension, for which we presently know only a proof based on an indirect algebraic argument.

\subsection*{Beyond Binary Alphabets}
Extending the Sauer Lemma to larger alphabets has been studied extensively over the past several decades. 
In the 1980s, Natarajan introduced a multiclass analogue of the VC dimension~\cite*{natarajan1989learning}, 
with the goal of extending the VC characterization of PAC learnability from binary to multiclass settings.

\paragraph{Cubes and the Natarajan dimension.}
Let $Y$ be a finite alphabet, which without loss of generality we identify with $[k]=\{0,1,\ldots,k-1\}$. 
For $\ell \le k$, a subset $B \subseteq [k]^d$ is called an \emph{$\ell$-cube} if it is a Cartesian product of $\ell$-subsets of $[k]$; that is,
\(B = Y_1 \times \cdots \times Y_d,\)
where each $Y_i \subseteq [k]$ satisfies~$|Y_i| = \ell$. 
For a class $\mathcal{H} \subseteq [k]^n$, 
we say that a set $\{i_1,\ldots,i_d\} \subseteq [n]$ is \emph{$\ell$-Natarajan-shattered} if
\[\{(h(i_1),\ldots,h(i_d)) : h \in \mathcal{H}\}\]
contains an $\ell$-cube. 
The \emph{$\ell$-Natarajan dimension} of $\mathcal{H}$ is the maximum size of a set that is $(\ell+1)$-Natarajan-shattered. 
This recovers several standard notions: when $\ell=1$ one obtains the classical Natarajan dimension, when $k=2$ and $\ell=1$ this is exactly the VC dimension, and when $\ell=k-1$ it coincides with the Steele dimension~\cite{Ste78}.

Natarajan showed that the Natarajan dimension characterizes PAC learnability when the label space $Y$ is bounded. 
Whether this characterization extends to unbounded label spaces remained open until recently, when Brukhim et al.~\cite{brukhim2022characterization} showed that it does not. 
They further proved that the dimension characterizing PAC learnability in full generality is the Daniely--Shalev--Shwartz (DS) dimension, introduced in~\cite{daniely2014optimal}, which underlies the main result of this paper. 
We defer its formal definition to the next section.

In the remainder of this section, we discuss Sauer-type bounds based on the Natarajan dimension.
Such bounds have been extensively studied in learning theory but also in other areas, including combinatorics, geometry, 
game theory, dynamical systems, model theory, and complexity theory; see, e.g., 
\cite*{Ste78, KM78, Alo83, natarajan1989learning, HL95, fischer-matousek-2001-natarajan, hegedus2010multivalued, furedi2012optimal, DanielySS18, charikar2023characterization, gao2023complexity}.
The Sauer-type inequality in this setting asserts that if $\mathcal{H} \subseteq Y^n$ has $\ell$-Natarajan dimension $d$, then
\[
|\mathcal{H}| 
\le 
\ell^{\,n-d} 
\sum_{i=0}^{d} 
\binom{n}{i} 
\binom{k}{\ell+1}^{\,i}.
\]
Up to lower-order terms, this yields
\begin{equation}
\label{eq:natarajan-sauer}
|\mathcal{H}|
\lesssim
\ell^{\,n-d}\; n^{d}\; k^{(\ell+1)d}.
\end{equation}
This inequality is due to~\cite{DanielySS18,charikar2023characterization}, 
extending earlier work of~\cite{HL95} in the case $\ell=1$. 
More recently,~\cite{gao2023complexity} obtained an improved but more intricate bound with the same asymptotic dependence.

The bound is tight only when $k=2$, in which case it coincides with the classical Sauer-Shelah-Perles Lemma. For larger alphabets, it is not tight. The following two examples illustrate this gap and help motivate the sharper inequality established in this work.

\medskip
\noindent\textbf{Example 1.}
Let $\mathcal{H} \subseteq [k]^n$ consist of all vectors $v$ satisfying the following property: 
for every~$v \in \mathcal{H}$, we have $v(i) < \ell$ for all but at most $d$ coordinates $i \in [n]$. 
Equivalently, each vector may take arbitrary values in $[k]$ on at most $d$ coordinates, 
and is restricted to $\{0,1,\ldots,\ell-1\}$ on the remaining coordinates. 
The $\ell$-Natarajan dimension of $\mathcal{H}$ is $d$, and a direct counting argument shows that
\[
|\mathcal{H}|
=
\sum_{i=0}^{d} \binom{n}{i} (k-\ell)^{\,i} \ell^{\,n-i}.
\]
In particular, for fixed $d$, this behaves as
\[
|\mathcal{H}| \approx \ell^{\,n-d} \, n^d \, (k-\ell)^d.
\]
Compared to~\eqref{eq:natarajan-sauer}, this replaces the factor $k^{(\ell+1)d}$ by $(k-\ell)^d$, 
which is exponentially smaller in $\ell$ and polynomially smaller in $k$.

As we will show later, our inequality in terms of the DS dimension matches this behavior. 
In fact, this example will turn out to be extremal for the DS-based bound, thereby witnessing its sharpness.~\footnote{Note that for $k = 2$ and $\ell = 1$, this example also shows the tightness of the classical Sauer–Shelah–Perles Lemma.}

\medskip
\noindent\textbf{Example 2.}
Consider the case $n=2$.
A class $\mathcal{H} \subseteq [k]^2$ can be identified with the edge set of a $k \times k$ bipartite graph. 
Having $\ell$-Natarajan dimension equal to $1$ is equivalent to the graph that does not contain a copy of $K_{\ell+1,\ell+1}$, the complete bipartite graph with sides of size $\ell+1$. 
Thus, upper bounding $|\mathcal{H}|$ in this case becomes a classical Turán-type problem: 
how many edges can a $k \times k$ bipartite graph have if it contains no~$K_{\ell+1,\ell+1}$? The Kővári-Sós-Turán theorem~\cite{kovari1954problem} implies that
\[
|\mathcal{H}| = O\!\left(k^{2-\frac{1}{\ell+1}}\right),
\]
and this bound is asymptotically tight for $\ell \le 2$. 
Whether it is tight for larger $\ell$ is a long-standing and notoriously difficult open problem in extremal combinatorics (see, e.g.~\cite{bukh2015random}).
By contrast, the bound~\eqref{eq:natarajan-sauer} in this case yields only the trivial upper bound $|\mathcal{H}| = O(k^2)$.

\smallskip

These examples illustrate two limitations of the Natarajan-based bound.
The first suggests that the exponential dependence on $\ell$ in~\eqref{eq:natarajan-sauer} may be unnecessarily pessimistic. 
The second example indicates that tight bounds purely in terms of the $\ell$-Natarajan dimension may be difficult to obtain, as even the case $n=2$ reduces to longstanding extremal problems.
We will show that the bound can be improved when parameterizing by the DS dimension, 
yielding a sharp inequality and, as a consequence, substantial improvements in PAC sample complexity bounds.

\section{The Inequality}
We begin by defining the Daniely-Shalev-Shwartz (DS) dimension. 
The key combinatorial object underlying this notion is the concept of a pseudo-cube, which generalizes the notion of a cube (recall that an $\ell$-cube is a Cartesian product of subsets of $[k]$ of size $\ell$).

\paragraph{Pseudo-cubes and the DS dimension.}
A nonempty set $B \subseteq [k]^d$ is called an \emph{$\ell$-pseudo-cube} if for every $v \in B$ and every coordinate $i \in [d]$, there exist distinct vectors $v_1,\ldots,v_{\ell-1} \in B$ that agree with $v$ on all coordinates except $i$. 
Thus, every vector in an $\ell$-pseudo-cube has at least $\ell-1$ neighbors in every coordinate direction.

Every $\ell$-cube is clearly an $\ell$-pseudo-cube, but the converse fails when $d>1$. For example, $B = \{ (1,2), (3,2), (3,4), (5,4), (5,6), (1,6) \}$ is not isomorphic to the Boolean cube, while it is indeed a $2$-pseudo-cube.
For $d=2$, a set $B \subseteq [k]^2$ may be viewed as the edge set of a bipartite graph on $[k]\sqcup[k]$. 
In this interpretation, $B$ is an $\ell$-pseudo-cube precisely when every non-isolated vertex has degree at least $\ell$. 
In contrast, an $\ell$-cube corresponds to the special case of a complete bipartite subgraph~$K_{\ell,\ell}$ of the ambient $k \times k$ graph.
For larger $d$, the combinatorial structure of pseudo-cubes becomes richer and connects to objects from topology and algebra; see~\cite{brukhim2022characterization}.

Let $\mathcal{H} \subseteq [k]^n$. 
We say that a set $\{i_1,\ldots,i_d\} \subseteq [n]$ is \emph{$\ell$-DS-shattered} by $\mathcal{H}$ if
\[
\{(h(i_1),\ldots,h(i_d)) : h \in \mathcal{H}\}
\]
contains an $(\ell+1)$-pseudo-cube. 
The \emph{$\ell$-DS dimension} of $\mathcal{H}$ is the maximum size of a set that is $\ell$-DS-shattered. 
Since every $(\ell+1)$-cube is an $(\ell+1)$-pseudo-cube, the $\ell$-DS dimension is always at least the $\ell$-Natarajan dimension. Moreover, in the case of $k = 2$ and $\ell = 1$, the $\ell$-DS dimension equals the VC dimension.

\medskip

The $(\ell=1)$-DS dimension was introduced by Daniely and Shalev--Shwartz~\cite*{daniely2014optimal} as a candidate dimension characterizing PAC learnability in the multiclass setting. 
Brukhim et al.~\cite*{brukhim2022characterization} later confirmed that the DS dimension indeed characterizes multiclass PAC learnability. 
Subsequently, Charikar et al.~\cite{charikar2023characterization} showed that, more generally, the $\ell$-DS dimension characterizes list-PAC learning with lists of size $\ell$, in which the learner outputs, for each input $x$, a list of at most $\ell$ labels and succeeds if the true label lies in this list. This list-learning framework arises naturally in applications such as recommendation systems, where a user is presented with a short list of options and success requires that at least one of them is correct. 
Beyond its practical motivation, list learning has also emerged as a useful intermediate tool in multiclass prediction, serving as a label-space reduction technique \cite{brukhim2022characterization}. 
It has been studied extensively in recent years; see, e.g., 
\cite{brukhim2022characterization,moran2023list,brukhim2023improperboosting,brukhim2024multiclass,hanneke2024list,hanneke2024improved,hanneke2025private,pabbaraju2024characterizationlistregression,charikar2025limit,erez2025list,cohen2026natarajan}.

\medskip

We are now ready to state the main combinatorial inequality of the paper.

\begin{theoremframe}
\begin{theorem}[A Sauer inequality for the Daniely--Shalev-Shwartz dimension]\label{thm:main}
Let $\mathcal{H} \subseteq [k]^n$ and let $d$ denote the $\ell$-DS dimension of $\mathcal{H}$. 
Then
\[
|\mathcal{H}|
\le
\sum_{i=0}^{d}
\binom{n}{i}
(k-\ell)^{\,i}
\ell^{\,n-i}.
\]
\end{theorem}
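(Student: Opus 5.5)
We will prove the bound with the polynomial method, as the abstract announces. We associate to each $h\in\mathcal H$ a function on $[k]^n$ and show that the span of these functions has dimension $|\mathcal H|$. We then show that every such function can be reduced, modulo a fixed ideal, into a vector space whose dimension is exactly $\sum_{i\le d}\binom{n}{i}(k-\ell)^i\ell^{n-i}$.

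\textbf{The vector space.} Work over $\mathbb{R}$ and regard $[k]\subseteq\mathbb{R}$. Call a monomial $x^\alpha=\prod_j x_j^{\alpha_j}$ with $0\le\alpha_j\le k-1$ \emph{$\ell$-low} if at most $d$ coordinates satisfy $\alpha_j\ge \ell$. The number of $\ell$-low monomials is exactly
\[
\sum_{i=0}^{d}\binom{n}{i}(k-\ell)^i\ell^{n-i},
\]
since we choose the $i$ ``high'' coordinates, give each $k-\ell$ exponent choices there, and give each remaining coordinate $\ell$ choices. This count is the reason the target bound has its form.

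\textbf{Main claim.} The restriction map from polynomials to functions $\mathcal H\to\mathbb R$ is still surjective when restricted to the span of $\ell$-low monomials. Since all functions on $\mathcal H$ form a space of dimension $|\mathcal H|$, the bound follows.

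Equivalently, we use dual vectors. Suppose some nonzero $f:\mathcal H\to\mathbb R$ is orthogonal to all $\ell$-low monomials, i.e.\ $\sum_{h}f(h)h^\alpha=0$ for each such $\alpha$. We must derive a contradiction by producing an $\ell$-DS-shattered set of size $d+1$.

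The standard route in the binary case (Frankl--Pach / Smolensky style, or the ``order-shattering'' argument) runs by induction on $n$. We split $\mathcal H$ according to the last coordinate value $c\in[k]$, setting $\mathcal H_c=\{h:h(n)=c\}$, and write polynomials as $\sum_{e<k}x_n^e\,p_e(x_1,\dots,x_{n-1})$. Lagrange interpolation in $x_n$ over the $k$ values lets us separate the contributions of the $\mathcal H_c$.

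Our plan is to prove a stronger inductive statement in the spirit of Smolensky/Babai--Frankl. For each class $\mathcal H$, the set of exponent vectors $\alpha$ that form a \emph{standard-monomial basis} (with respect to a fixed degree-compatible order) of the functions on $\mathcal H$ is a down-closed family $S(\mathcal H)$ with $|S(\mathcal H)|=|\mathcal H|$. This is Gröbner-basis theory for the vanishing ideal of $\mathcal H$, and it is the first step.

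\textbf{Key lemma.} If $\alpha\in S(\mathcal H)$ and $I=\{j:\alpha_j\ge \ell\}$, then $I$ is $\ell$-DS-shattered. Granting it, any $\alpha\in S(\mathcal H)$ has $|I|\le d$, so $S(\mathcal H)$ is contained in the set of $\ell$-low monomials, and we are done.

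To prove the key lemma, fix such $\alpha$ and restrict to the coordinates in $I$. Since $S$ is down-closed, the restriction $\mathcal H|_I$ has a standard monomial $\prod_{j\in I}x_j^{\alpha_j}$ with all $\alpha_j\ge\ell$. This needs a compatibility fact: standard monomials of a projection include the projected standard monomials. That holds for elimination-type orders, so we may need to choose the order carefully, or else argue via dual functionals directly.

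So it suffices to show the following. If $B\subseteq[k]^d$ has a standard monomial with every exponent $\ge\ell$, then $B$ contains an $(\ell+1)$-pseudo-cube.

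\textbf{Constructing the pseudo-cube.} Iteratively delete every vector $v\in B$ that has at most $\ell-1$ other vectors of $B$ differing from it only in some coordinate $i$. Such a $v$, together with its at most $\ell-1$ neighbours along coordinate $i$, lies on a line $L$ in direction $i$ with $|L\cap B|\le\ell$.

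The deletion must not destroy the high monomial. The intended argument is that a function supported on such a short line can be expressed using $x_i^{e}$ with $e<\ell$ times functions of the other coordinates. Hence the function space on $B$ decomposes so that every monomial with $x_i$-degree $\ge\ell$ that is standard for $B$ remains standard for $B\setminus\{v\}$. The dimension drop of one should be absorbed by a monomial with $x_i$-exponent below $\ell$.

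If this peeling invariant holds, the process ends at a nonempty set in which every point has at least $\ell$ neighbours in each direction, i.e.\ an $(\ell+1)$-pseudo-cube. Nonemptiness holds because the high monomial survives.

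\textbf{Main obstacle.} The peeling step is the crux, and I expect it to be the main difficulty: proving that removing a point with a sparse fiber preserves all standard monomials that are high in that direction. I would first try it in the dual-functional language. A high monomial is standard if and only if there is a functional on $B$ annihilating all smaller monomials but not it. We would modify this functional along the line $L$ using the fact that $\le\ell$ points on a line are annihilated by the degree-$\ell$ divided difference in $x_i$.
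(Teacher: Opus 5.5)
\textbf{The Key Lemma is false for $k>2$.} Your reformulation, that the $\ell$-low monomials span all functions on $\mathcal{H}$, is exactly what has to be shown. The route through a term order cannot work, however, because the Key Lemma fails once $k>2$, and with it the claim about $B$ and the peeling invariant.

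Take $k=3$, $\ell=1$, $n=2$ and $\mathcal{H}=\{(0,0),(0,1),(1,1),(1,2),(2,2)\}$. As a bipartite graph this is a path, so it contains no $2$-pseudo-cube. Its $1$-DS dimension is $1$, and $|\mathcal{H}|=5$ meets the bound.

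Fix a term order with $x_1>x_2$. Every monomial smaller than $x_1x_2$ is either $x_1$ or a power of $x_2$. The functional $\lambda=(0,1,-1,-1,1)$ (points in the order listed) annihilates $x_1$ and every function of $x_2$ on $\mathcal{H}$, while $\lambda(x_1x_2)=1$. So $x_1x_2$ is not congruent modulo the vanishing ideal to a combination of smaller monomials, i.e.\ it is standard. For $x_2>x_1$, the functional $(1,-1,-1,1,0)$ does the same job: it kills $x_2$ and every function of $x_1$, and gives $1$ on $x_1x_2$.

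Thus for every term order the staircase contains $x_1x_2$, whose high coordinates $\{1,2\}$ are not $1$-DS-shattered. This happens even though the five $\ell$-low monomials $1,x_1,x_1^2,x_2,x_2^2$ do form a basis here; they just do not form a Gr\"obner staircase for any order.

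The peeling invariant also fails concretely. With $x_1>x_2$, the restriction of $\lambda$ shows that $x_1x_2$ is standard for $\{(0,1),(1,1),(1,2),(2,2)\}$. Yet deleting $(0,1)$, which has no neighbour in direction $2$, leaves three points. A down-closed staircase of size $3$ cannot contain $x_1x_2$ together with its divisors $1,x_1,x_2$.

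The projection-compatibility step you worried about is actually fine for every order; the problem is the Key Lemma itself. The binary phenomenon that standard monomials are shattered does not extend to $k>2$. Neither a clever choice of order nor a dual-functional version of the same statement can close the gap.

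\textbf{What the paper does instead.} What is missing is a direct spanning argument that does not require the basis to be a staircase. The paper inducts on $n$, with the base case $n=d$ handled by Lagrange interpolation on $[k]^n$. For $n>d$, $\mathcal{H}$ is not an $(\ell+1)$-pseudo-cube, so some $h_1$ has fewer than $\ell$ neighbours in some direction $i$. The induction hypothesis applied to the projection $\mathcal{H}_{-i}$ gives an allowed polynomial $P_1$ in the other variables equal to the indicator of the fibre of $h_1$. Let $S$ be the set of $i$-th coordinates of those neighbours, so $|S|<\ell$. Then $Q_1=P_1\prod_{j\in S}(x_i-j)/(h_1(i)-j)$ has $x_i$-degree below $\ell$, is still $\ell$-low, and equals $\mathbf{1}_{h_1}$ on $\mathcal{H}$.

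Deleting $h_1$ and repeating produces polynomials that are triangular with respect to the deletion order. They therefore form a basis of the functions on $\mathcal{H}$ inside the span of the $\ell$-low monomials. Your peeling idea is the right combinatorial ingredient. It should be used to build a triangular family of $\ell$-low polynomials, with the other coordinates supplied by induction on $n$, not to preserve standard monomials.
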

\end{theoremframe}


This bound coincides exactly with the size of the class in Example~1. 
Since that class has $\ell$-DS dimension $d$, the inequality is tight for every admissible choice of $n,k,\ell$, and $d$. In the special case $\ell=1$, the bound reduces to
\[
|\mathcal{H}|
\le
\sum_{i=0}^{d}
\binom{n}{i}
(k-1)^{\,i},
\]
which recovers the classical Sauer--Shelah--Perles inequality when $k=2$ and replaces the $\binom{k}{2}$ term in the one-dimensional Natarajan bound by the linear factor $(k-1)$.

The tightness of the bound and its dependence on $\ell$ and $k$ stand in contrast to the inequalities in terms of the $\ell$-Natarajan dimension discussed in the previous section. 
In particular, as seen there, Natarajan-based bounds are not tight in general and exhibit exponential dependence on $\ell$, whereas the inequality above achieves tight growth and replaces this by a polynomial dependence. 
This improvement in the dependence on $\ell$ plays a central role in the learning-theoretic applications presented in the next section.

\paragraph{Organization of the remainder of the paper.}
We begin with the proof of the main Sauer-type inequality for the DS dimension. 
The subsequent section discusses several open problems and directions for further investigation, both in combinatorics and in learning theory. 
The final section turns to learning-theoretic implications, where we demonstrate concrete applications of the inequality to multiclass PAC learning.

The discussion section may be read independently of the proof, and the applications section relies only on the statement of the inequality.
Readers primarily interested in learning-theoretic consequences may therefore proceed directly to that section.
The proof itself may nevertheless be of independent interest as a particularly transparent application of the polynomial method in extremal combinatorics. For additional applications of linear-algebraic methods in combinatorics, see the textbook of Babai and Frankl~\cite{babai2020linear}.
\subsection{Proof}
\begin{proof}[Proof of \Cref{thm:main}]
The proof proceeds by a linear-algebraic argument. 
At its core lies the following elementary principle: in any vector space, the size of an independent set is at most the size of a spanning set. 
That is, if $I$ is a linearly independent set and $S$ is a spanning set, then
\[
|I| \le |S|.
\]
Thus, our strategy is to construct a suitable vector space $V$ together with:
\begin{itemize}
    \item a linearly independent set of vectors indexed by the functions in $\mathcal{H}$, and
    \item a spanning set whose size equals the right-hand side of the claimed inequality.
\end{itemize}
Since every spanning set must contain at least as many vectors as any independent set, this will immediately yield the desired upper bound on $|\mathcal{H}|$.

\paragraph{The Vector Space.}
We now describe the vector space $V$. 
Let $\mathbb{F}$ be any field containing at least $k$ distinct elements; for concreteness, we take $\mathbb{F}=\mathbb{R}$. Since $\mathcal{H} \subseteq [k]^n$, we may view $\mathcal{H}$ as a subset of $\mathbb{R}^n$. We consider
\[
V = \{ f : \mathcal{H} \to \mathbb{R} \},
\]
the space of all functions from $\mathcal{H}$ to $\mathbb{R}$, equipped with pointwise addition and scalar multiplication.
This space is naturally isomorphic to $\mathbb{R}^{|\mathcal{H}|}$, since each function is determined by its values on $\mathcal{H}$.

\paragraph{The Independent Set.}
For each $h \in \mathcal{H}$, define the indicator function
\[
\mathbf{1}_h(g) =
\begin{cases}
1 & \text{if } g = h, \\
0 & \text{otherwise}.
\end{cases}
\]
The collection $\{\mathbf{1}_h : h \in \mathcal{H}\}$ is linearly independent in $V$, and therefore forms an independent set of size $|\mathcal{H}|$.

\paragraph{The Spanning Set.}
We construct the spanning set using polynomials.
Since $\mathcal{H} \subseteq \mathbb{R}^n$, every polynomial 
\[
P(x_1,\ldots,x_n) \in \mathbb{R}[x_1,\ldots,x_n]
\]
defines a function in $V$ by restriction: namely, for each $h \in \mathcal{H}$ we evaluate $P(h)$ by plugging in the coordinates of $h$. 
Thus, each polynomial induces a function from $\mathcal{H}$ to $\mathbb{R}$.

Our spanning set will consist of the monomials
\[
x_1^{e_1} x_2^{e_2} \cdots x_n^{e_n},
\]
where the exponents satisfy:
\begin{itemize}
    \item $0 \le e_i < k$ for every $i$, and
    \item all but at most $d$ of the indices $i$ satisfy $e_i < \ell$.
\end{itemize}
The number of such monomials is
\[
\sum_{i=0}^{d} \binom{n}{i} (k-\ell)^{\,i} \ell^{\,n-i},
\]
which matches exactly the right-hand side of the inequality in the theorem. 
Observe that this set of monomials mirrors the structure of the extremal family from Example~1.

We now prove that the above set of monomials spans $V$. 
The proof proceeds by induction on $n$, while $k$, $\ell$, and $d$ are fixed.

\medskip
\noindent \textbf{Base case: $n=d$.}
When $n=d$, the condition that all but at most $d$ of the exponents satisfy~$e_i < \ell$ becomes vacuous. 
Thus, the spanning set consists of all monomials
\(x_1^{e_1} \cdots x_n^{e_n}\) with \(0 \le e_i < k.\)
We claim that these monomials span $V$. 
Indeed, since $\mathcal{H} \subseteq [k]^n$, it suffices to show that every function \(f : [k]^n \to \mathbb{R}\) can be represented by a polynomial $P(x_1,\ldots,x_n)$ in which each variable has degree strictly less than $k$.~\footnote{In the special case $k=2$, this reduces to the well-known fact from Boolean function analysis that every function on the Boolean cube $\{0,1\}^n$ has a multilinear polynomial representation.}
This follows from standard polynomial interpolation. 
In particular, for each $h \in [k]^n$, the indicator function $\mathbf{1}_h$ can be explicitly written as
\[
\prod_{i=1}^{n}
\left(
\prod_{\substack{j \in [k] \\ j \ne h(i)}}
\frac{x_i - j}{h(i) - j}
\right).
\]
Each factor depends only on a single coordinate and has degree at most $k-1$ in that coordinate. 
Therefore, every function on $[k]^n$ admits a polynomial representation in which each variable has degree less than $k$, and the claim follows.

\medskip
\noindent
\textbf{Induction step: $n>d$.}
Since $n>d$ and the $\ell$-DS dimension of $\mathcal{H}$ is $d$, the set $\mathcal{H}$ does not contain an $(\ell+1)$-pseudo-cube supported on all $n$ coordinates. Indeed, otherwise $[n]$ would be $\ell$-DS-shattered, contradicting the definition of $d$.
In particular, $\mathcal{H}$ is not an $(\ell+1)$-pseudo-cube. Hence, there exist $h_1\in\mathcal{H}$ and a coordinate $i\in[n]$ such that $h_1$ has fewer than~$\ell$ neighbors in direction $i$ within $\mathcal{H}$.

Let $\mathcal{H}_{-i}$ denote the projection of $\mathcal{H}$ onto the coordinates $[n]\setminus\{i\}$. 
By the induction hypothesis applied to $\mathcal{H}_{-i}$, there exists a polynomial $P_1$ (in the variables $x_j$ with $j\neq i$) satisfying the degree constraints and such that for every $h\in\mathcal H$,
\[
P_1(h)=
\begin{cases}
1 & \text{if } h(j)=h_1(j)\text{ for all } j\neq i,\\
0 & \text{otherwise}.
\end{cases}
\]
Let $S$ denote the set of values taken by the $i$-th coordinate among the neighbors of $h_1$ in $\mathcal{H}$.  Since $h_1$ has fewer than $\ell$ neighbors in direction $i$, we have $|S|<\ell$.
Define
\[
Q_1(x)
=
P_1(x)\prod_{j\in S}\frac{x_i - j}{h_1(i)-j}.
\]
Then $Q_1$ satisfies:
\[
Q_1(h_1)=1,
\qquad
Q_1(h)=0 \quad \text{for all } h \in \mathcal{H},\ h\neq h_1.
\]
Moreover, all individual degrees in $Q_1$ satisfy the required constraints: 
the degree in coordinate~$i$ is strictly less than $\ell$, and the remaining coordinates satisfy the induction hypothesis constraints.

We now remove $h_1$ from $\mathcal{H}$ and denote the remaining class by $\mathcal{H}_2$. 
Since $\mathcal{H}_2\subseteq \mathcal{H}$, it is also not an $(\ell+1)$-pseudo-cube, and the same argument produces $h_2 \in \mathcal{H}_2$ and a polynomial~$Q_2$ satisfying
\[
Q_2(h_2)=1,
\qquad
Q_2(h)=0 \quad \text{for all } h \in \mathcal{H}_2,\ h\neq h_2.
\]
Continuing like this, we obtain an ordering $h_1,h_2,\ldots$ of $\mathcal{H}$ and polynomials $Q_1,Q_2,\ldots$ such that
\[
Q_i(h_j)
=
\begin{cases}
1 & \text{if } i=j,\\
0 & \text{if } i>j.
\end{cases}
\]
Thus, the evaluation matrix $A_{ij}=Q_i(h_j)$ is triangular with ones on the diagonal. In particular, the polynomials $Q_1,\ldots,Q_{|\mathcal{H}|}$ are linearly independent in $V$. Since there are $|\mathcal{H}|$ of them and $\dim V = |\mathcal{H}|$, they form a basis of $V$.
Since each $Q_i$ is a linear combination of the allowed monomials, it follows that those monomials span $V$, as required.
\end{proof}

\subsection{Discussion and Open Problems}

\paragraph{On the algebraic nature of the proof.}

Our proof relies on a linear-algebraic argument. 
Although algebraic methods have repeatedly proved powerful in extremal combinatorics, it is natural to wonder whether the DS Sauer inequality admits a direct combinatorial proof.

Beyond aesthetic considerations, such a proof could shed light on the structural properties of hypothesis classes with finite DS dimension, and might even help address open problems in learning theory. A central open question in multiclass PAC learning is to determine the optimal dependence of the sample complexity on the DS dimension. 
The best known lower bound scales as $\mathrm{DS}/\varepsilon$, where $\varepsilon$ is the accuracy parameter, while the best known upper bound
(due to \cite{hanneke2024improved}) scales as
\[
\frac{\mathrm{DS}^{3/2} \log(\mathrm{DS}/\varepsilon)\log(\mathrm{DS})}{\varepsilon}.
\]
It is widely believed that the upper bound is not tight and can be improved to match the linear dependence suggested by the lower bound. In the binary setting, the optimal $\mathrm{VC}/\varepsilon$ upper bound is obtained via arguments that are closely tied to combinatorial proofs of the classical Sauer-Shelah-Perles lemma, such as the shifting/symmetrization argument and the inductive proof based on the inclusion graph~\cite{haussler1994predicting}.\footnote{There are alternative proofs of the optimal $\mathrm{VC}/\varepsilon$ bound, for example the argument of~\cite{hanneke16optimal}, which is effectively built upon uniform convergence analysis for certain conditional distributions. However, such uniform convergence techniques do not extend to the multiclass setting.}
It is, therefore, natural to speculate that a combinatorial proof of the DS Sauer inequality could lead to improved sample complexity bounds in the multiclass case as well.

\paragraph{Special cases admitting combinatorial proofs.}
In certain restricted settings, purely combinatorial arguments are available.

For example, when $n=2$, a class $\mathcal{H}\subseteq [k]\times[k]$ may be viewed as the edge set of a bipartite graph on $[k]\sqcup[k]$. 
In this case, having $\ell$-DS dimension $1$ is equivalent to the property that every nonempty subgraph contains a vertex of degree at most $\ell$. 

In this case, a direct combinatorial proof of the bound is obtained by repeatedly peeling off a vertex of degree at most $\ell$ and deleting all incident edges. 
Since each step removes at most $\ell$ edges and the process continues for $2k$ steps, a simple counting argument yields
\[
\lvert \mathcal{H}\rvert \le \ell(2k-\ell),
\]
which matches the bound of Theorem~\ref{thm:main} for $n=2$ and $d=1$.

There is also a combinatorial proof in the case $d=\ell=1$ and arbitrary $n$ and $k$. 
This argument was found by Nitay Amiel, an undergraduate student at the Technion, and is presented in Appendix~\ref{sec:nitay}.

\paragraph{Relationship between $\ell$-Natarajan and $\ell$-DS dimensions.}

It is also natural to ask how the $\ell$-Natarajan and $\ell$-DS dimensions compare.
It follows immediately from the definitions that
\[
\ell\text{-Nat}(\mathcal{H}) \le \ell\text{-DS}(\mathcal{H}).
\]
In the other direction, a general upper bound of the form
\[
\ell\text{-DS}(\mathcal{H})
\le
O\!\left(\ell \cdot \ell\text{-Nat}(\mathcal{H}) \cdot \log k\right)
\]
follows from the Sauer inequality for the $\ell$-Natarajan dimension.
Indeed, suppose a set of size~$n$ is $\ell$-DS-shattered. 
Then the restriction of $\mathcal{H}$ to this set contains an $(\ell+1)$-pseudo-cube, and in particular at least $\ell^n$ distinct vectors. 
On the other hand, if the $\ell$-Natarajan dimension equals $d$, the Natarajan-Sauer bound implies that the number of distinct vectors on this set is at most
\[
\ell^{\,n-d}
\sum_{i=0}^{d}
\binom{n}{i}
\binom{k}{\ell+1}^{\,i}.
\]
Comparing the two bounds and solving for $n$ yields
\[
n = O(\ell d \log k),
\]
as claimed.

It would be interesting to determine whether the factor of $\ell$ in this upper bound is necessary, or whether a sharper relationship between the two dimensions can be established.




\paragraph{Maximum classes for the DS dimension.}
In the classical VC setting, a class is called \emph{maximum} if it meets the Sauer--Shelah--Perles inequality with equality for all $n$. 
Such classes have been extensively studied in learning theory, combinatorics, geometry, and even computational biology~\cite{floyd:95,Kuzmin:07,Rubinstein:09,Rubinstein:12,rubinstein:15,moran2016labeled,alon2017sign,Chepoi:21,Chalopin:22,chase:24,lawrence:83,Bollobas:89,Gartner:94,Bollobas:95,Moran:12,chepoi:22,Dress:96,Bandelt:06}. 
They exhibit rich mathematical structure, with connections to geometry, topology, and algebra.

Since our Sauer-type inequality for the DS dimension is tight, DS-maximum classes exist as well. 
It is therefore natural to ask whether these classes possess additional structure, and whether they admit alternative characterizations beyond the extremal counting construction.

In the VC case, much of the structural understanding of maximum classes arises from combinatorial proofs of the classical Sauer lemma. 
Accordingly, exploring DS-maximum classes is naturally aligned with the search for a combinatorial proof of the DS Sauer inequality. 
A systematic study of DS-maximum classes therefore constitutes a promising direction for future work.

\paragraph{Other algebraic Sauer inequalities.}
The present result is not the only instance in which algebraic techniques have been used to derive Sauer-type inequalities. 
There exist linear-algebraic proofs of the classical Sauer inequality~\cite{FP83,Smo97,MR16}, as well as algebraic proofs for the corresponding inequalities for the Recursive Teaching dimension~\cite{Samei14teaching} and for the Graph dimension~\cite{Samei14compression}.\footnote{Related algebraic arguments have also been useful for other types of learning theoretic results.  For instance, \cite{pmlr-v19-yang11a} prove a result on identifiability of prior distributions over VC classes from data distributions, via a nested induction argument related to the one in our proof.} 
For the latter two dimensions, no purely combinatorial proofs are currently known. 
It would be interesting to understand whether such proofs exist, and whether they might reveal additional structural insights.

Moreover, in~\cite{Samei14compression}, the authors introduce a broader family of combinatorial dimensions - generalizing the Graph dimension - for which they establish tight Sauer-type bounds via algebraic methods. 
While the DS dimension does not appear to be a special case of this family, it would be interesting to clarify how it relates to these dimensions. 
For example, it is known that the DS dimension is upper bounded by the Graph dimension, but it is unclear how it compares to the other members of that family.

\section{Implications for List Learning}

List prediction is a natural and increasingly important extension of classical multiclass classification. 
Rather than outputting a single label, the learner outputs a short list of labels, and the prediction is deemed successful if the true label appears in this list.

This relaxation arises in many practical settings. 
In recommendation systems, for example, the goal is not to predict a single item but to present the user with a short list of plausible options, with success measured by whether the user selects one of them (see Figure~\ref{fig:list-recommendation}). 
Similarly, in large-scale classification problems with thousands of labels, it is often more meaningful to output a small candidate set than a single prediction. 
Another prominent example is the top-$k$ loss function, in which the model is allowed $k$ guesses for each sample. 
This loss is widely used in machine learning practice and competitions, and can be viewed as a variant of list learning; see, e.g.,~\cite{Lapin2015LossFF,YAN2018479}. 
List prediction also provides a principled way to address label ambiguity. 
In computer vision tasks, for instance, it may be inherently unclear whether an image depicts a pond or a river, and penalizing the model for every such mistake can be overly restrictive. 
Allowing a short list of plausible labels offers a natural alternative.

From a theoretical perspective, list learning has emerged as a central theme in multiclass learning with large label spaces. 
Allowing the learner to output a short list of candidate labels can substantially reduce sample complexity, and list prediction has played a key role in a series of recent advances in multiclass and structured learning 
\cite{brukhim2022characterization,
      charikar2023characterization,
      moran2023list,
      brukhim2023improperboosting,
      brukhim2024multiclass,
      hanneke2024list,
      hanneke2024improved,
      pabbaraju2024characterizationlistregression,
      hanneke2025a,
      hannekerepresentation,
      hanneke2025private,
      erez2025list,
      cohen2026natarajan,
      charikar2025limit}.

In some of these works, list learning is studied as a goal in its own right, while in others it serves as a structural tool for reducing the label space and for overcoming barriers that arise in classical multiclass PAC learning. List-valued prediction rules also appear in related frameworks such as \emph{conformal learning}, where algorithms output predictions together with measures of confidence or uncertainty. For example, such procedures may produce a set of plausible labels for an unlabeled test point $x$, which can be viewed as a list of candidate predictions. See the monograph of~\cite{vovk2005algorithmic} and the surveys~\cite{shafer2008tutorial,angelopoulos2021gentle} for further background. List learning is also closely connected to the setting of list-decodable learning in statistics; for a detailed exposition, see, for example, Chapter~5 of the recent textbook by~\cite*{diakonikolas2023algorithmic}.      

These considerations motivate the formal study of list PAC learning, in which the learner is permitted to output a list of size $L$, and performance is measured by whether the true label appears in the predicted list.

\begin{figure}[t]
\centering
\begin{tikzpicture}[
    >=Stealth,
    node distance=2.2cm,
    every node/.style={font=\small},
    box/.style={draw, rectangle, rounded corners, minimum height=1cm, minimum width=2.6cm},
    item/.style={draw, rectangle, minimum width=1.8cm, minimum height=0.6cm}
]

\node[box] (user) {User features $x$};

\node[box, right=of user] (predictor) {List predictor};

\node[right=of predictor, xshift=0.5cm] (listanchor) {};

\node[item] (i1) at ($(listanchor)+(0,0.8)$) {Movie $M_1$};
\node[item] (i2) at ($(listanchor)+(0,0)$) {Movie $M_7$};
\node[item] (i3) at ($(listanchor)+(0,-0.8)$) {$\checkmark$ Movie $M_{12}$};

\draw[->] (user) -- (predictor);
\draw[->] (predictor) -- (i2);

\end{tikzpicture}
\caption{Illustration of list prediction in a recommendation setting. 
Given user features $x$, the learner outputs a short list of items. 
Prediction succeeds if the user’s true preference appears in the list.}
\label{fig:list-recommendation}
\end{figure}
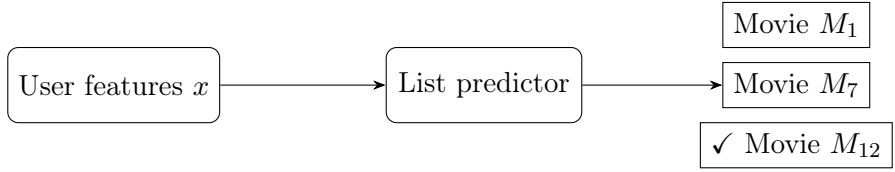

\subsection{List PAC Learning}

We now formalize the learning setting.
Let $\mathcal{X}$ be an instance space and $\mathcal{Y}$ a label set.  
A \emph{concept class} is a set $\mathcal{C}\subseteq \mathcal{Y}^{\mathcal{X}}$ of functions $c:\mathcal{X}\to \mathcal{Y}$.  
The learning goal is to identify an unknown target concept $c^\star \in \mathcal{C}$ from labeled examples.

In classical multiclass PAC learning, the learner outputs a hypothesis $h:\mathcal{X}\to\mathcal{Y}$ that assigns each input a single label.  
In $\ell$-list learning, the learner is allowed to output \emph{a set of at most $\ell$ labels} for each input.  
Formally, an $\ell$-list predictor is a function
\[
h : \mathcal{X} \to \binom{\mathcal{Y}}{\le \ell},
\]
where $\binom{\mathcal{Y}}{\le \ell}$ denotes the collection of subsets of $\mathcal{Y}$ of size at most $\ell$.  
Prediction is deemed correct on $(x,y)$ if $y \in \hat h(x)$.
We consider the standard realizable PAC model.  
For a distribution $\mathcal{D}$ over $\mathcal{X}\times \mathcal{Y}$ and a function
\(f : \mathcal{X} \to \binom{\mathcal{Y}}{\le \ell},\)
define the population loss
\[
\mathcal{L}_{\mathcal{D}}(f)
:=
\Pr_{(x,y)\sim\mathcal{D}}\!\big[y\notin f(x)\big].
\]
In the special case $\ell=1$, we identify $\binom{\mathcal{Y}}{\le 1}$ with $\mathcal{Y}$, and the condition
$y\notin f(x)$ reduces to the usual misclassification event $f(x)\neq y$.
A distribution $\mathcal{D}$ is said to be realizable by $\mathcal{C}$ if
\[
\inf_{c\in\mathcal{C}} \mathcal{L}_{\mathcal{D}}(c) = 0.
\]
A learning algorithm receives $m$ i.i.d.\ samples from $\mathcal{D}$ and outputs an $\ell$-list predictor $\hat h$.

\begin{definition}[$\ell$-list PAC learnability and sample complexity]
Fix $\ell\in\mathbb{N}$. 
A concept class~$\mathcal{C}$ is \emph{$\ell$-list PAC learnable} if there exists a learning algorithm $\mathcal{A}$ such that for every choice of error parameter $\varepsilon\in(0,1)$ and confidence parameter $\delta\in(0,1)$ there exists an integer 
\(m_{\mathcal{C}}(\varepsilon,\delta)\)
with the following property: for every distribution $\mathcal{D}$ realizable by $\mathcal{C}$, when given $m_{\mathcal{C}}(\varepsilon,\delta)$ i.i.d.\ samples from $\mathcal{D}$, the algorithm outputs an $\ell$-list predictor $\hat h$ satisfying
\[
\Pr\big[\,\mathcal{L}_{\mathcal{D}}(\hat h) \le \varepsilon\,\big]
\;\ge\; 1-\delta.
\]
The minimal such function $m_{\mathcal{C}}(\varepsilon,\delta)$ is called the 
\emph{sample complexity} of $\ell$-list PAC learning for~$\mathcal{C}$.
\end{definition}
Thus, the sample complexity quantifies the number of examples required to guarantee,
with confidence at least $1-\delta$, that the resulting $\ell$-list predictor
has error at most $\varepsilon$.

\medskip

Classical multiclass PAC learning corresponds to the special case $\ell=1$.
Allowing for larger lists weakens the prediction requirement and can strictly reduce sample complexity, particularly when the label space is large.

The combinatorial parameter governing $\ell$-list PAC learnability is the $\ell$-DS dimension.
For the remainder of this section, we write $\DS_\ell(\mathcal{C})$ to denote the $\ell$-DS dimension of a concept class $\mathcal{C}$. 
While the preceding sections focused on classes \(\mathcal{H}\subseteq [k]^n\), the definition of the \(\ell\)-DS dimension naturally extends to arbitrary concept classes \(\mathcal{C}\subseteq \mathcal{Y}^{\mathcal{X}}\).
Formally, \(\DS_\ell(\mathcal{C})\) is the largest integer \(d\) for which there exist points \(x_1,\dots,x_d\in\mathcal{X}\) such that the set
\[
\{(c(x_1),\dots,c(x_d)) : c\in\mathcal{C}\}
\]
contains a \emph{finite} \(\ell\)-pseudo-cube.
If such sets exist for arbitrarily large \(d\), we define \(\DS_\ell(\mathcal{C})=\infty\).

A fundamental result in the theory of list learning states that a concept class \(\mathcal{C}\) is \(\ell\)-list PAC learnable if and only if 
\(\DS_\ell(\mathcal{C})<\infty\)~\cite{charikar2023characterization,brukhim2022characterization}. 
This characterization was established by~\cite{brukhim2022characterization} in the case \(\ell=1\) (the classical single-label setting), and was later extended by~\cite{charikar2023characterization} to arbitrary \(\ell\).
Beyond this qualitative characterization, a basic and central quantitative question is how the sample complexity depends on \(d := \DS_\ell(\mathcal{C})\) and on $\ell$.
The analysis of~\cite{charikar2023characterization} yields an upper bound scaling as
\[
\widetilde{O}\!\left(\frac{\ell^{6}d^{3/2}}{\varepsilon}\right),
\]
up to the standard logarithmic dependence on $\delta$. 
Subsequently,~\cite{brukhim2024multiclass} developed a boosting-based approach, obtaining an incomparable bound of order
\[
\widetilde{O}\!\left(\frac{\ell^{4}d^{5}}{\varepsilon}\right).
\]
Using the sharp Sauer inequality for the $\ell$-DS dimension established in this work, we obtain the following improved upper bound.
\begin{theorem}[PAC upper bound for $\ell$-list learning]\label{thm:list-pac-upper}
Let $\mathcal{C}\subseteq \mathcal{Y}^{\mathcal{X}}$ be a concept class with 
$d = \DS_\ell(\mathcal{C}) < \infty$. 
Then $\mathcal{C}$ is $\ell$-list PAC learnable with sample complexity
\[
m_{\mathcal{C}}(\varepsilon,\delta)
=
\widetilde{O}\!\left(
\frac{\ell d^{3/2} + \ell d \log(1/\delta)}{\varepsilon}
\right),
\]
where $\widetilde{O}(\cdot)$ hides polylogarithmic factors in $d$, $\ell$, and $1/\varepsilon$.
\end{theorem}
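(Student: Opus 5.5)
The plan is to take the existing pipeline of \cite{charikar2023characterization} (and the refinement of \cite{hanneke2024improved} for $\ell=1$) and identify the single place where a Sauer-type bound enters. Theorem~\ref{thm:main} replaces the Natarajan-based bound~\eqref{eq:natarajan-sauer} at that point. The pipeline works in three steps.

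\textbf{Step 1 (list reduction).} Using a one-inclusion/orientation argument for $\ell$-pseudo-cubes, one builds from a sample of size about $d$ a \emph{menu} learner, i.e.\ a list predictor with lists of size $p=p(d,\ell)$, that has constant error. Boosting and compression then turn this into a learner that outputs lists of size $p$ with error $\varepsilon$.

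\textbf{Step 2 (restriction).} Restricted to these lists, the class behaves like a class over an alphabet of size $k=p$. We then run an $\ell$-list learner for such a class.

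\textbf{Step 3 (compression).} Sample compression and standard generalization turn a compression scheme of size $s$ into sample complexity $\widetilde{O}((s+\log(1/\delta))/\varepsilon)$.

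The key use of Theorem~\ref{thm:main} is in bounding the size of the final compression, or equivalently the number of distinct list-behaviours on a sample of size $n$. On $n$ points with alphabet size $k$, the projected class has at most
\[
\sum_{i\le d}\binom{n}{i}(k-\ell)^i\ell^{n-i}\le \ell^{n}\Bigl(\frac{e n k}{d\,\ell}\Bigr)^{d}
\]
patterns. The factor $\ell^n$ is unavoidable and harmless, because it corresponds exactly to the $\ell$ labels already allowed in each list. The excess entropy beyond $\ell^n$ is therefore only $d\log(nk/d\ell)$, whereas the Natarajan bound gives $(\ell+1)d\log k$. I would formalize this through a majority-vote or ERM-over-lists step in the stage that reduces lists of size $p$ to lists of size $\ell$. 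There one needs a union bound or uniform convergence over the induced class of list-predictors, and its log-cardinality is now $n\log\ell + O(d\log(nk))$. The $n\log\ell$ part cancels against the definition of success, since we only charge mistakes outside a list of size $\ell$. With $k=p=\mathrm{poly}(\ell,d)$ this contributes $\widetilde{O}(d)$ rather than $\widetilde{O}(\ell d)$.

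Next I would track the parameters. The menu stage of \cite{charikar2023characterization,hanneke2024improved} costs $\widetilde{O}(\ell d^{3/2})$ samples, the $d^{3/2}$ coming from the one-inclusion-graph density bound for pseudo-cubes combined with a $\sqrt{d}$ loss in the list-size/error trade-off. I would recheck that this stage is linear in $\ell$ once the exponentially-in-$\ell$ Sauer estimate is removed. The pruning stage, via the new bound, costs $\widetilde{O}(\ell d\log(1/\delta)/\varepsilon)$, after union-bounding over $\ell$ rounds of confidence boosting. Together these give the stated $\widetilde{O}((\ell d^{3/2}+\ell d\log(1/\delta))/\varepsilon)$.

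The main obstacle is pinning down exactly where the polynomial factors in $\ell$ arise in the prior analysis ($\ell^6$ in \cite{charikar2023characterization}). Replacing the Natarajan-based Sauer bound may remove only some of them; other factors could come from the list-size blowup $p$ in the menu construction. I would first try to show that the menu size can be taken $p=O(\ell d)$ and that every remaining dependence on $p$ is logarithmic. This holds precisely when the only cardinality estimate used is Theorem~\ref{thm:main}, with $k=p$ entering only inside $\log$.
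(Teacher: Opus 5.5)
The step where you actually use Theorem~\ref{thm:main} does not work. You propose a union bound or uniform-convergence/ERM step over a family whose log-cardinality on $n$ points is $n\log\ell+O(d\log(nk))$, with the $n\log\ell$ term ``cancelling against the definition of success.'' It does not cancel. An Occam/union-bound argument over a family $F$ needs $\log|F|\lesssim\varepsilon n$, because a fixed predictor of error $\varepsilon$ stays consistent with $n$ samples with probability $(1-\varepsilon)^{n}$ however mistakes are charged. A term $n\log\ell$ therefore makes the bound vacuous. You also cannot fall back on ERM. Single-label ERM over $\mathcal{C}$ is governed by the graph dimension of $\mathcal{C}$, which is not controlled by $\DS_\ell(\mathcal{C})$: for infinite $\mathcal{X}$, $\mathcal{C}=\{0,1\}^{\mathcal{X}}$ has $\DS_2(\mathcal{C})=0$ but infinite graph dimension. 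And you never exhibit a class of $\ell$-list predictors that both contains a sample-consistent hypothesis and has $\ell$-DS dimension $O(d)$.

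The missing idea is the paper's one-inclusion-graph route. There Theorem~\ref{thm:main} enters only through the $\ell$-exponential dimension of the class restricted to the menu labels, i.e.\ alphabet size $k=\ell'$. For an $\ell$-exponentially shattered set $S$ of size $g$ one has $(\ell+1)^{g}\le|\mathcal{H}_S|\le\ell^{g}(ekg/d)^{d}$, hence $g=O(\ell d\log k)$. This $g$ bounds the maximum $\ell$-outdegree of a suitable $\ell$-list orientation of the one-inclusion graph. Leave-one-out then turns that outdegree into expected error $O(\ell d\log\ell'/m)$.

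Your parameter accounting also fails. The $\ell^{g}$ factor cancels only partially, leaving $(1+1/\ell)^{g}$, and that residual is exactly where the factor $\ell$ in $\ell d$ comes from. Since Theorem~\ref{thm:main} is tight, this factor is genuinely present: for $k\ge 2\ell$ the class of Example~1 has $\ell$-exponential dimension $\Omega(\ell d)$. So the $\widetilde{O}(d)$ contribution you claim is not what the counting gives. Recovering the $\ell$ from ``$\ell$ rounds of confidence boosting'' also has no basis, since amplification uses $\lceil\log(2/\delta)\rceil$ rounds.

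More importantly, substituting Theorem~\ref{thm:main} at the single Sauer step of Charikar et al.\ only improves their exponential-dimension bound from $O(\ell^{2}d\log k)$ to $O(\ell d\log k)$. That saves one factor of $\ell$, not the five needed to go from $\ell^{6}d^{3/2}$ to $\ell d^{3/2}$. The paper gets the rest from two ingredients absent from your plan:
\begin{itemize}
\item an $\ell$-list orientation with maximum $\ell$-outdegree at most $\lceil\mdl(\mathcal{H})\rceil\le\dexp$, obtained by showing $\savd\le\dexp$ via shifting and orienting through an integral max-flow (the earlier route used the average $\ell$-degree, bounded only by $\ell^{2}\dexp$);
\item Haussler--Littlestone--Warmuth confidence amplification.
\end{itemize}

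Finally, the $\ell d^{3/2}$ term does not come from the menu stage. That stage costs $\widetilde{O}((d^{3/2}+\log(1/\delta))/\varepsilon)$ with no factor $\ell$, but it outputs lists of size $\ell'\le\ell(ed)^{\sqrt{d}}\log(2m_1)$. The $\ell d^{3/2}$ term then arises as $\ell d\cdot\log\ell'$. Shrinking the menu to $p=O(\ell d)$, as you propose, is unsupported and also unnecessary, since $k$ enters only through $\log k$.
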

Notably, the bound has no dependence on the size of the label space, and in particular applies also when \(\mathcal{Y}\) is infinite.

\subsection{List Uniform Convergence}
\label{subsec: list uniform convergence}
Uniform convergence and empirical risk minimization (ERM), defined below, are among the central principles of statistical learning theory.
In the binary setting, the Fundamental Theorem of PAC Learning states that PAC learnability, uniform convergence, and the ERM principle are equivalent~\cite{VC71}; see also the textbook treatment in~\cite{shalev2014understanding}. 
More generally, for finite label spaces, uniform convergence and PAC learnability are known to coincide~\cite{shalev-shwartz10uniform}. 
Beyond its theoretical significance, ERM is also closely connected to widely used statistical principles such as maximum likelihood estimation.

Informally, uniform convergence refers to the phenomenon whereby, given sufficiently many samples drawn from a distribution, the empirical losses of all hypotheses in a class simultaneously approximate their true population losses. 
This principle underlies the validity of ERM: if empirical losses uniformly approximate true losses, then minimizing empirical loss is a sound strategy for approximately minimizing population loss.

In the context of list learning, following~\cite{hanneke2024list}, we study uniform convergence for hypothesis classes of \(\ell\)-list predictors rather than single-label predictors. 
This is natural, since a list learning algorithm outputs an \(\ell\)-list predictor and therefore selects its hypothesis from such a class. 
Accordingly, the notion of the \(\ell\)-DS dimension naturally extends to classes of list predictors. Let \(\listC\) be a class of \(\ell\)-list predictors \(c:\mathcal{X}\to \binom{\mathcal{Y}}{\le \ell}\). 
We say that a set \(\{i_1,\ldots,i_d\} \subseteq [n]\) is \emph{\(\ell\)-DS-shattered} by \(\listC\) if
\[
\{(y_1,y_2,\ldots,y_d): \exists c \in \listC,\ \forall j,\ y_j \in c(i_j)\}
\]
contains an \((\ell+1)\)-pseudo-cube. 
The \emph{\(\ell\)-DS dimension} of \(\listC\) is the maximum size of a set that is \(\ell\)-DS-shattered.
For a dataset $S=((x_i,y_i))_{i=1}^n$, define the empirical loss
\[
\mathcal{L}_{S}(c)
:=
\frac{1}{n}\sum_{i=1}^n \mathbf{1}\{y_i \notin c(x_i)\}.
\]

\begin{definition}[Uniform convergence and uniform sample complexity]
A hypothesis class~$\listC$ of $\ell$-list predictors satisfies \emph{uniform convergence} if there exists a function
\[
m_{\listC}^{\mathrm{UC}} : (0,1)^2 \to \mathbb{N}
\]
such that for every error parameter $\varepsilon\in(0,1)$ and confidence parameter $\delta\in(0,1)$ the following holds: for every distribution $\mathcal{D}$ over $\mathcal{X}\times\mathcal{Y}$, if $S$ consists of $m_{\listC}^{\mathrm{UC}}(\varepsilon,\delta)$ i.i.d.\ samples from $\mathcal{D}$, then with probability at least $1-\delta$,
\[
\sup_{h\in\listC}
\big|
\mathcal{L}_{\mathcal{D}}(c)
-
\mathcal{L}_{S}(c)
\big|
\le \varepsilon.
\]
The minimal such function $m_{\listC}^{\mathrm{UC}}(\varepsilon,\delta)$ is called the 
\emph{uniform convergence sample complexity} of~$\listC$.
\end{definition}
It was shown by~\cite{hanneke2024list} that any $\ell$-list learnable class 
$\listC\subseteq \mathcal{Y}^{\mathcal{X}}$ with $|\mathcal{Y}|=k$ and 
$\DS_\ell(\listC) = d$ satisfies uniform convergence with sample complexity
\[
m_{\listC}^{\mathrm{UC}}(\varepsilon,\delta)
=
\widetilde{O}\!\left(
\frac{\ell^2 d + \log(1/\delta)}{\varepsilon^2}
\right),
\]
where $\widetilde{O}(\cdot)$ hides logarithmic factors in $k$, $d$, $\ell$.

The following theorem exploits the sharp Sauer inequality for the $\ell$-DS dimension established in this work, yielding an improvement in the dependence on $\ell$.

\begin{theorem}[Uniform convergence for $\ell$-list predictors]\label{thm:UC-upper}
Let $\listC\subseteq \mathcal{Y}^{\mathcal{X}}$ be a class of $\ell$-list predictors with 
$d = \DS_\ell(\listC) < \infty$ and $|\mathcal{Y}|=k$. 
Then $\listC$ satisfies uniform convergence with sample complexity
\[
m_{\listC}^{\mathrm{UC}}(\varepsilon,\delta)
=
\widetilde{O}\!\left(
\frac{\ell d + \log(1/\delta)}{\varepsilon^2}
\right),
\]
where $\widetilde{O}(\cdot)$ hides polylogarithmic factors in $d$, $\ell$, $k$.
\end{theorem}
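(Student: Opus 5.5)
We prove \Cref{thm:UC-upper} with the classical symmetrization/growth-function argument. The new ingredient is that we control the growth function through the sharp Sauer inequality of \Cref{thm:main}, applied to a class of \emph{single-label} vectors canonically associated with $\listC$ restricted to a sample.

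\textbf{Step 1 (symmetrization).} Let $S$ be a sample of size $n$ and $S'$ an independent ghost sample. The standard double-sample and random-swap argument gives
\[
\Pr\Big[\sup_{c}|\mathcal{L}_{\mathcal{D}}(c)-\mathcal{L}_S(c)|>\varepsilon\Big]\le 4\,\Pi(2n)\,e^{-n\varepsilon^2/8}.
\]
Here $\Pi(m)$ is the maximum, over $(x_i,y_i)_{i=1}^m$, of the number of distinct loss patterns $\big(\mathbf 1\{y_i\notin c(x_i)\}\big)_{i\le m}$ realized by $c\in\listC$.

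\textbf{Step 2 (counting loss patterns via a single-label class).} Fix points $x_1,\dots,x_m$. Let
\[
\mathcal{H}=\{(y_1,\dots,y_m):\exists c\in\listC,\ \forall j,\ y_j\in c(x_j)\}\subseteq[k]^m .
\]
By the definition of the $\ell$-DS dimension for list classes, $\mathcal{H}$ has $\ell$-DS dimension at most $d$. Applying \Cref{thm:main} gives
\[
|\mathcal{H}|\le\sum_{i\le d}\binom{m}{i}(k-\ell)^i\ell^{m-i}.
\]
This is exponential in $m$, so we cannot simply bound loss patterns by $|\mathcal{H}|$. Instead we use the following trick. Each loss pattern is determined by the set $T=\{j: y_j\in c(x_j)\}$. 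Restrict to the coordinates in $T$, and among patterns with the same $T$ pick representative functions $c$. To count distinct $T$'s, consider the class of $0/1$ vectors $\{\mathbf 1\{y_j\in c(x_j)\}\}_j$ and bound its VC dimension. Suppose a set $J$ of size $t$ is shattered. For each $j\in J$, the $\ell$-lists $c(x_j)$ of the shattering witnesses include lists containing $y_j$ and lists not containing it.

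Rather than work directly with lists, we pass to an auxiliary alphabet. For each coordinate $j\in J$, choose $y_j$ together with $\ell$ further labels drawn from the witnesses' lists that omit $y_j$; this yields an alphabet of size at most $\ell+1+\ell\cdot(\text{small})$. We then apply \Cref{thm:main} on $J$ with $k$ replaced by this $\mathrm{poly}(\ell)$-sized alphabet. The lower bound on $|\mathcal{H}|_J|$ comes from the fact that every shattering witness contributes $\ell$ choices per coordinate on the coordinates it covers. In total we aim for the inequality
\[
\ell^{t}\cdot 2^{t}/\mathrm{poly}\le |\mathcal{H}|_J|\le\binom{t}{\le d}\,(\mathrm{poly}(\ell))^{d}\,\ell^{t-d}.
\]
This would give $t=O(d\log(\ell d))$, and hence, by Sauer–Shelah, $\log\Pi(m)=\widetilde O(d\log m)$.

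\textbf{Fallback route.} If the lower-bound construction in Step 2 fails, there is a simpler path. Bound $\Pi(m)$ by the number of $\ell$-list restrictions of $\listC$ to $x_1,\dots,x_m$. Each such list pattern is determined by the patterns of $\ell$ single-label selections inside $\mathcal{H}$, giving
\[
\log\Pi(m)\le \ell\,\log|\mathcal{H}'|
\]
for a suitably normalized $\mathcal{H}'$. Normalizing away the trivial factor $\ell^{m-i}$ by relabeling inside lists, we get $\log|\mathcal{H}'|=O(d\log(mk))$, and therefore $\log\Pi(m)=O(\ell d\log(mk))$. This is exactly where the improvement over the Natarajan-based growth bound enters: the Natarajan bound would give an extra $\ell$ factor (the $k^{(\ell+1)d}$ term), whereas \Cref{thm:main} gives $(k-\ell)^d$.

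\textbf{Step 3 (conclusion).} Plug $\log\Pi(2n)=O(\ell d\log(nk))$ into the bound of Step 1 and solve
\[
\ell d\log(nk)+\log(1/\delta)\lesssim n\varepsilon^2 .
\]
This yields $n=\widetilde O\big((\ell d+\log(1/\delta))/\varepsilon^2\big)$.

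\textbf{Main obstacle.} The hard step is Step 2: turning the sharp Sauer bound for single-label vectors into an $\exp(\widetilde O(\ell d))$ bound on loss patterns of list predictors. The difficulty is that the naive count $|\mathcal{H}|$ carries the unavoidable $\ell^{m}$ factor. I would first try the normalization in the fallback route: within each coordinate, canonically index the $\ell$ labels of the list, which absorbs the $\ell^{m-i}$ term and leaves only $\binom{m}{\le d}(k-\ell)^d$ genuinely free choices per selector.
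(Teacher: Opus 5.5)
\textbf{The gap is Step 2, which carries the whole theorem, and neither of your routes establishes it.}

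In the primary route, the lower bound $|\mathcal{H}|_J|\gtrsim \ell^t2^t/\mathrm{poly}$ is false, because the boxes $A_b=\prod_j c_b(x_j)$ of the shattering witnesses overlap heavily. Here is a counterexample. Take $c_b(x_j)=R_j\cup\{p_j\}$ if $b_j=1$ and $c_b(x_j)=R_j\cup\{q_j\}$ if $b_j=0$, with $|R_j|=\ell-1$ and $p_j\neq q_j$ both outside $R_j$. This graph-shatters $J$ with pivot $p$. Yet $\bigcup_b A_b=\prod_j(R_j\cup\{p_j,q_j\})$ has only $(\ell+1)^t=\ell^t(1+1/\ell)^t$ elements.

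So no lower bound valid for every graph-shattered configuration can exceed $\ell^t(1+1/\ell)^t$. The paper imports $\tfrac14\ell^t(1+\tfrac{1}{2\ell-1})^t$ from Hanneke et al. Matching any such bound against the upper bound $\ell^t(tk)^d$ from \Cref{thm:main} gives $\Gl(\listC)=O(\ell d\log(\ell dk))$, not $O(d\log(\ell d))$; the factor $\ell$ is inherent to this counting. That comparison, followed by the characterization $m^{\mathrm{UC}}=\Theta\big((\Gl+\log(1/\delta))/\varepsilon^2\big)$, is exactly the paper's proof. Separately, shrinking to a $\mathrm{poly}(\ell)$-size alphabet discards witnesses whose lists leave it. Your lower and upper bounds would then not count the same set.

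In the fallback, ``normalizing away $\ell^{m-i}$ by relabeling inside lists'' is asserted, not argued. \Cref{thm:main} applied to any subclass $\mathcal{H}'\subseteq\mathcal{H}$, such as an order-statistic selector class, still gives a bound containing $\ell^{m}$.

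The natural way to reach $(mk)^{O(d)}$ would be to bound the Natarajan dimension of $\mathcal{H}'$ by $O(d)$, and that fails. Take $\ell=2$ and $c_b(x_j)=\{b_j,\,2+\mathrm{enc}(b)\}$ for $b\in\{0,1\}^t$. Each label $2+\mathrm{enc}(b)$ has only two neighbours in any two-coordinate projection of $\mathcal{H}$, so there is no $3$-pseudo-cube and $\DS_2=1$. Yet the min-selector class is all of $\{0,1\}^t$.

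You can repair the fallback by dropping the selectors and viewing $\listC$ as a single-valued class into $\binom{[k]}{\ell}$. Suppose it Natarajan-shatters $J$ with witnesses $f_0,f_1$. Then $\prod_j\big(f_0(x_j)\cup f_1(x_j)\big)\subseteq\mathcal{H}|_J$, and each factor has at least $\ell+1$ labels (using that lists have size exactly $\ell$, as in the paper's count $|A_b|=\ell^g$). Hence $J$ is $\ell$-DS-shattered. The classical Natarajan--Sauer bound then gives at most $\binom{m}{\le d}\binom{k}{\ell}^{2d}$ list patterns, i.e.\ $\log\Pi(m)\le d\log(em/d)+2\ell d\log k$. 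This does not use \Cref{thm:main} at all.

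Finally, solving Step 3 through the growth function leaves a $\log(1/\varepsilon)$ factor that the statement's $\widetilde O$ does not hide. Convert your growth bound into a VC bound on the loss class and use the optimal chaining rate, as the paper does.
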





\subsection{Overview of the Proofs}

In this subsection, we will give the proof sketches for learning and uniform convergence bounds stated above for $\ell$-list prediction, and the formal proofs can be found in \cref{sec:formal-proof-list-pac} and \cref{sec:formal-proof-list-uc}.

Our arguments are based on the proof techniques developed in the works of~\cite{charikar2023characterization,brukhim2024multiclass,hanneke2024list}, whose bounds we refine. 
In particular, we follow their overall reduction from sample complexity bounds to combinatorial growth bounds.
For the list learning guarantees, we refine several parts of the argument, in addition to substituting our new sharp Sauer lemma.
For completeness and clarity, we present the proofs in a largely self-contained manner, highlighting the points at which the new sharp Sauer inequality for the $\ell$-DS dimension yields improved quantitative bounds.

\subsubsection{Proof sketch for \cref{thm:list-pac-upper}}


At a high level, the proof proceeds in three stages. We first learn an intermediate $\ell'$-list predictor for some $\ell' \gg \ell$. This predictor's role is to retain the true label with high probability while substantially narrowing the set of candidate labels (from possibly infinite to finite). We then use this intermediate predictor to filter a new independent sample, thereby reducing the original problem to a list learning problem with a finite label space. Finally, we solve this problem to obtain the target $\ell$-list predictor.

Let $d=\DS_{\ell}(\listC)$. In the first stage, we apply the learner of \cite{charikar2023characterization} to an i.i.d. sample $S_1$ of size $m_1$, obtaining an intermediate $\ell'$-list predictor $\mu^{\ell'}$, where $\ell' \le \ell (ed)^{\sqrt d}\log(2m_1).$

In the second stage, we draw another independent sample $S_2$ of size $m_2$ and retain only those examples whose labels belong to the predicted list: $S_{\mathrm{real}}=\{(x,y)\in S_2:\; y\in \mu^{\ell'}(x)\}.$
Since the distribution is realizable by $\listC$, the filtered sample $S_{\mathrm{real}}$ is still realizable by $\listC$. Thus, the first two stages reduce the original list PAC learning problem to learning from a realizable sample, with labels from a $\ell's$-list. 

The key tool in the third stage is the one-inclusion-list algorithm of \cite{charikar2023characterization}. At a high level, this algorithm is analyzed via the one-inclusion graph of the restriction of $\listC$ to the sample. The vertices of this graph correspond to candidate labelings, and two vertices are adjacent if they differ on a single sample point. An orientation of this graph induces a prediction rule, and the sample complexity is controlled by the maximum $\ell$-outdegree of the chosen orientation. Thus, the learning problem is reduced to a combinatorial question: how small can one make the relevant outdegree in the one-inclusion graph?

This is precisely where our main improvement enters. A central quantity in the analysis is the $\ell$-exponential dimension $\mathrm{E}_{\ell}(\listC)$ from \cite{charikar2023characterization}, which controls the maximum $\ell$-outdegree. The previous argument upper bounded this quantity in terms of $\ell$-Natarajan dimension using a Sauer-type bound whose dependence on $\ell$ is quadratic \cite{daniely2015inapproximability,charikar2023characterization}. Our optimal Sauer lemma sharpens this counting step by saving a factor of $\ell$.

Concretely, let $g=\mathrm{E}_{\ell}(\listC)$, and let $S\in\cX^g$ be a sample of size $g$ that is $\ell$-exponential shattered by $\listC$. Writing $\cH=\listC|_S$, the definition of $\ell$-exponential dimension gives $|\cH|\ge (\ell+1)^g$. On the other hand, our optimal Sauer lemma implies the upper bound
\[
|\cH|\le \ell^g \left(\frac{e\ell' g}{d}\right)^d.
\]
Comparing the lower and upper bounds yields
\[
g=O(\ell d\log \ell').
\]
Thus, our Sauer lemma gives a sharper control of $\mathrm{E}_{\ell}(\listC)$ than in \cite{charikar2023characterization}.

We combine this with two further improvements in the third stage. First, previous work \cite{charikar2023characterization} bounds the maximum $\ell$-outdegree by the average degree of the one-inclusion graph, and then upper bounds that average degree by $\ell^2 \mathrm{E}_{\ell}(\listC)$ via induction. Our argument instead uses a sharper flow-based orientation method, inspired by \cite{haussler1994predicting}, to control the maximum $\ell$-outdegree by the shifting average degree of the graph. We then show that this shifting average degree is at most $\mathrm{E}_{\ell}(\listC)$. This yields an additional factor-$\ell^2$ improvement. The underlying construction is implemented via an auxiliary integer flow network, and the details are deferred to the formal proof.
Second, we use a standard confidence amplification argument from \cite{haussler1994predicting} to obtain the final hypothesis, which saves another factor of $\ell^2$. More specifically, we repeat the one-inclusion list learner on independent sub-samples and aggregate the resulting hypotheses by a standard voting step. Together, these refinements improve the sample complexity upper bound by a factor of $\ell^5$ compared to \cite{charikar2023characterization}.

Choosing $m_1=\widetilde{O}\!\left(\frac{d^{3/2}+\log(1/\delta)}{\epsilon}\right)
\qquad$ and $\qquad
m_2=\widetilde{O}\!\left(\frac{\ell d\log(\ell'/\delta)}{\epsilon}\right),$
we obtain the total sample complexity $m=m_1+m_2
=
\widetilde{O}\!\left(
\frac{\ell d^{3/2}+\ell d\log(1/\delta)}{\epsilon}
\right).$
The formal algorithms and proofs are given in \cref{sec:formal-proof-list-pac}.

\subsubsection{Proof sketch for \cref{thm:UC-upper}}
Before providing the proof sketch of the theorem, we first introduce the graph dimension \cite{vapnik1974theory, natarajan1989learning} that is known to characterize uniform convergence. A sequence $S=\{x_i\}_{i=1}^n$ is $\Gl$-shattered by a $\ell$-list concept class $\listC$ if there is  $p\in \cY^n$ called a pivot, such that for any $b\in \{0,1\}^n$ there is $c_b\in \listC$ such that for all $i \in [n]$, we have: $\mathbf{1}\{p_i\in c_b(x_i)\} = b_i$. 
The $\ell$-graph dimension of $\listC$ is $\Gl(\listC)$ the size of the largest $\Gl$-shattered sequence, or infinity if there are $\Gl$ shattered sequences of arbitrary size.

Now, according to \cite{vapnik1974theory}, the VC dimension fully characterizes the uniform convergence of a binary hypothesis class. In particular, note that the $\ell$-graph dimension of a list hypothesis class can be viewed as the VC dimension of the class of binary functions $\{(x,y) \mapsto \mathbf{1}\{y \notin c(x)\} : c \in \listC\}$. In this way, for a list hypothesis class $\listC$, the $\ell$-graph dimension $\Gl(\listC)$ fully characterizes uniform convergence and the rate would be $\Theta\left(\sqrt{\frac{\Gl(\listC)}{n}}\right)$, where $n$ is the sample size. Thus, the remainder of the proof is upper bounding the $\ell$-graph dimension by $\ell$-DS dimension.

The proof largely follows that of \cite{hanneke2024list}, and we use our optimal Sauer lemma to obtain a sharper bound. The proof proceeds by lower bounding the cardinality of the projection class induced by $\listC$ on a sample of size $\Gl(\listC)$ that is $\ell$-graph shattered by $\listC$, and upper bounding the same quantity in terms of $\DS_\ell(\listC)$.

Let $d = \DS_\ell(\listC)$ and $ g = \Gl(\listC)$. Let $S\in \cX^g$ be a sample of size $g$ that is $\ell$-graph shattered by $\listC$. Define $\cH=\cH(\listC\vert_S)=\{h\in \cY^g\;:\; \exists c\in\listC ,\; h \prec c|_{S}\}$ as the class of projections induced by $\listC$ on the sample $S$. The lower bound follows from \cite{hanneke2024list}, which gives $|\cH| \geq \frac{(2\ell)^g}{4(2\ell-1)^g}\ell ^g$.
Next, the upper bound follows from our optimal Sauer lemma. In particular, by \cref{thm:main}, we have $|\cH| \leq \ell ^ g (gk)^{\DS_{\ell}(\cH)}$. Moreover, $\DS_{\ell}(\listC)\geq \DS_{\ell}(\cH)$, since any sample that is $\ell$-DS-shattered by $\cH$ is also $\ell$-DS-shattered by $\listC$. Thus, we have the upper bound $|\cH| \leq \ell ^ g (gk)^{d}$. Combining the lower and upper bounds, we have $d\geq \widetilde{\Omega}\left(\frac{g}{\ell(\log g + \log k)}\right)$. Finally, combining this inequality with the uniform convergence rate in terms of the $\ell$-graph dimension completes the proof.
\subsection{Complete proofs of \cref{thm:list-pac-upper} and \cref{thm:UC-upper}}
In this section, we provide the complete proofs for \cref{thm:list-pac-upper} and \cref{thm:UC-upper}.

\subsubsection{Proof of \cref{thm:list-pac-upper}}\label{sec:formal-proof-list-pac}
We first introduce the $\ell$-exponential dimension \cite{charikar2023characterization} (see also \cite{brukhim2022characterization}), a dimension directly defined in terms of the number of patterns realizable by the hypothesis class $\listC$. 
\begin{definition}[$\ell$-exponential dimension \cite{charikar2023characterization}]
    We say that a hypothesis class $\listC\subseteq \cY^{\cX}$ $\ell$-exponential shatters a sequence $S\in \cX^{d}$ if $|\cH|\geq(\ell+1)^{d}$ where $\cH = \listC|_S$ is the restriction of $\listC$ to $S$. The $\ell$-exponential dimension of a hypothesis class $\listC$, denoted by $\mathrm{E}_{\ell}(\listC)$, is the largest integer $d$ such that $\listC$ $\ell$-exponential shatters some sequence~$S\in\cX^{d}$.
\end{definition}
For simplicity, we denote $\ell$-DS dimension $\DS_{\ell}(\listC)$ as $\dds$ and $\ell$-exponential dimension $\mathrm{E}_{\ell}(\listC)$ as $\dexp$. Next, we define an important combinatorial object associated with a hypothesis class: the one-inclusion graph.

\begin{definition}[One-inclusion graph \cite{haussler1994predicting, rubinstein2006shifting}]
The one-inclusion graph of $\cH \subseteq \cY^m$ is a hypergraph $G(\cH)=(V,E)$ that is defined as follows. The vertex-set is $V=\cH$. For each $i \in [m]$ and $f:[m]\setminus\{i\} \to\cY$, let $e_{i,f}$ be the set of all $h \in \cH$ that agree with $f$ on $[m]\setminus \{i\}$. The edge-set is
    \begin{equation*}
        E = \{e_{i,f}: i \in [m], f:[m] \setminus \{i\} \to \cY, e_{i,f}\neq \emptyset \}.
    \end{equation*}
    We say that the edge $e_{i,f}\in E$ is in the direction $i$, and is adjacent to the hypothesis/vertex $h$ if $h \in e_{i,f}$. Every vertex $h\in V$ is adjacent to exactly $m$ edges. The size of the edge $e_{i,f}$ is the size of the set $|e_{i,f}|$.
\end{definition}
With respect to the one-inclusion graph, we consider the degrees of its vertices and the average degree of the graph.
\begin{definition}[$\ell$-degree \cite{charikar2023characterization}]
    \label{def: L-degree}
    Let $G(\cH)=(V,E)$ be the one-inclusion graph of $\cH \subseteq \cY^m$. The ${\ell}$-degree of a vertex $v \in V$ is
    \begin{align*}
        \degree(v) &= |\{e \in E: v \in e, |e|>\ell\}|.
    \end{align*}
\end{definition}

\begin{definition}[Average ${\ell}$-degree \cite{charikar2023characterization}]
    \label{def: avg-L-degree}
    Let $G(\cH)=(V,E)$ be the one-inclusion graph of $\cH \in \cY^m$. The average ${\ell}$-degree of $\cH$ is
    \begin{align*}
        \avd(\cH) = \frac{1}{|V|}\sum_{v \in V}\degree(v) = \frac{1}{|V|}\sum_{e \in E:|e|>{\ell}}|e|.
    \end{align*}
\end{definition}
Next, we restate the definition of list orientation from \cite{charikar2023characterization}. The notion of orienting refers to assigning directions to the edges of a one-inclusion graph toward the lists of vertices they are adjacent to. Such an orientation captures the behavior of a deterministic learning algorithm when it predicts the label of an unlabeled test point, given a set of labeled examples as input.

\begin{definition}[List Orientation\cite{charikar2023characterization}]
    A list orientation $\sigmal$ of the one-inclusion graph $G(\cH) = (V,E)$ having list size $L$ is a mapping $\sigmal:E \to \{V'\subseteq V: |V'|\le \ell\}$ such that for each edge $e \in E$, $\sigmal(e) \subseteq e$.
    
\end{definition}
With respect to a list orientation $\sigma_\ell$, one can consider the $\ell$-outdegree of this orientation. In particular, we have the following definition.
\begin{definition}[${\ell}$-outdegree of a list orientation \cite{charikar2023characterization}]
    Let $G(\cH)=(V,E)$ be the one-inclusion graph of $\cH$, and let $\sigmal$ be a ${\ell}$-list orientation of it. The ${\ell}$-outdegree of $v \in V$ in $\sigmal$ is
    \begin{equation*}
        \outdeg(v;\sigmal) = |\{e:v \in e, v \notin \sigmal(e)\}|.
    \end{equation*}
    The maximum ${\ell}$-outdegree of $\sigmal$ is
    \begin{equation*}
        \outdeg(\sigmal) = \sup_{v \in V} \outdeg(v;\sigmal).
    \end{equation*}
\end{definition}



Now, we restate the definition of list realizability from \cite{charikar2023characterization}. Moreover, for completeness, we include the corresponding learning algorithm.
\begin{definition}[List realizability \cite{brukhim2022characterization}]
A sample $S \in (\cX \times \cY)^m$ is \emph{realizable} by the list $\mu$ if $y\in \mu(x)$ for every $(x,y)$ in $S$. A distribution $\cD$ over $\cX \times \cY$ is {\em realizable} by $\mu$ if for every $m \in \mathbb{N}$, a random sample $S\sim \cD^m$ is realizable by $\mu$ with probability~$1$.
\end{definition}

\begin{algorithm}[H]
\caption{One-inclusion list algorithm $\cA_{\listC,\mu^{\ell'}}^{\ell}$ for a class $\listC\subseteq\cY^{\cX}$ and list $\mu^{\ell'}$} 
\label{algo: learn a L-list from L'-list}
\begin{flushleft}
  {\bf Input:} A sample $S = \big((x_1, y_1),\cdots,(x_m, y_m)\big)$ realizable by $\cH$ and $\mu^{{\ell}'}$.\\
{\bf Output:} A ${\ell}$-list hypothesis $\cA_{\listC,\mu^{{\ell}'}}^{{\ell}}(S)=\mu^{\ell}_S:\cX \to \{Y \subseteq \cY: |Y|\le {\ell}\}$.\\
\ \\
For each $x \in \cX$, the ${\ell}$-list $\mu^{{\ell}}_S(x)$ is computed as follows.
\end{flushleft}
\begin{algorithmic}[1]
\State Consider the class $\cH \subseteq \cY^{m+1}$ of all patterns over the \emph{unlabeled data} that are realizable by both $\listC$ and $\mu^{{\ell}'}$ i.e., $\cH=\{h \in \listC|_{(x_1,\ldots,x_m,x)} : h(m+1) \in \mu^{{\ell}'}(x) \text{ and } h(i) \in \mu^{{\ell}'}(x_i)\}$.
\State Find a ${\ell}$-list orientation $\sigma^{{\ell}}$ of $G(\cH)$ that \textit{minimizes} the \textit{maximum} ${\ell}$-outdegree.
\State Consider the edge in direction $m+1$ defined by $S$:
\[e =\{ h \in \cH:  \forall i \in [m]  \ \ 
h(i) =y_i\}.\]
\State Set $\mu^{{\ell}}_S(x) = \{h(m+1):h \in \sigma^{{\ell}}(e)\}$.
\end{algorithmic}
\end{algorithm}

Our proof adapts the conventional technique for bounding the leave-one-out error of a one-inclusion graph. The proof of this part can be divided into two steps: (1) In the first step, we upper bound the maximum ${\ell}$-outdegree by ${\ell}$-exponential dimension (see \cref{lemma: list orientation}). Compared to Corollary 6.5 in \cite{charikar2023characterization}, we save a factor of ${\ell}^2$. (2) In the second step, using \cref{thm:main}, we can further increase the bounding of the ${\ell}$-exponential dimension by $O({\ell}\dds)$, (see \cref{corollary: bound exp by L LDS}). Compared to Lemma 6.6 in \cite{charikar2023characterization}, we save another factor of ${\ell}$. 
Combining step 1 and step 2, and comparing with the analysis in Proposition 7.3 of \cite{charikar2023characterization}, we improve the bound by a factor of ${\ell}^3$ in total.

Then, we start our analysis of \cref{algo: learn a L-list from L'-list}, and start by bounding the maximum outdegree of the one-inclusion list algorithm. 
In previous papers \cite{haussler1995sphere,rubinstein2006shifting,brukhim2022characterization,charikar2023characterization}, shifting is a major tool for bounding outdegree. For a beautiful introduction on the shifting operation, we refer the reader to the beginning of Section 3 in \cite{brukhim2022characterization}. For completeness, we restate the definition and some major properties that we will use in our proof as follows.
\begin{definition}[Shifting \cite{brukhim2022characterization}]
    \label{def:shifting}
    Let $\cH \subseteq [k]^m$ and let $i \in [m]$. The shifting operator in the $i^{\text{th}}$ direction $\shift_i$ maps $\cH$ to its shifted version $\shift_i(\cH)$ as follows. Shifting is first defined on edges. For $f : [m] \setminus \{i\} \to [k]$, let $e_{f}$ be the collection of $h \in \cH$ that agree with $f$ on $[m] \setminus \{i\}$. The shifting $\shift_i(e_f)$ is obtained by ``pushing
    $e_f$ downward''; namely, $\shift_i(e_f)$ is the collection of all $g \in [k]^m$ that agree with $f$ on $[m] \setminus \{i\}$ and $1 \leq g(i) \leq |e_f|$. The shifting of $\cH$ is the union of all shifted edges
    $$\shift_i(\cH) = \bigcup_{f} \shift_i(e_f) \subseteq [k]^m.$$
\end{definition}
Shifting has the useful property that the fixed point $\cH_*$ of the shifting operations (i.e., $\mathbb{S}_i(\cH_*) = \cH_*$ for all $i \in [m]$) is guaranteed to exist and is furthermore closed downward (i.e., if $h \in \cH_* \subseteq [k]^m$, and $g_i \le h_i \, \forall i \in [m]$, then $g \in \cH^*$).

We restate the following result from \cite{charikar2023characterization}: the shifting operation does not increase the ${\ell}$-exponential dimension. This is an immediate corollary of Claim 22 in \cite{brukhim2022characterization}.

\begin{lemma}[Corollary 6.2, \cite{charikar2023characterization}]
    \label{lemma: exponential dimension shifting}
    For every $\cH \subseteq [k]^m$ and $i \in [m]$,
    \begin{align*}
        \dexp(\mathbb{S}_i(\cH)) \le \dexp(\cH).
    \end{align*}
    Thus, $\dexp(\cH_*)\le \dexp(\cH)$, where $\cH_*$ is the fixed point of the shifting operations i.e., $\mathbb{S}_i(\cH_*) = \cH_*$ for all $i \in [m]$.
\end{lemma}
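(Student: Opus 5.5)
The plan is to reduce the statement to a monotonicity property of projections: for every sequence $S$ of coordinates, the restriction of $\shift_i(\cH)$ to $S$ has at most as many patterns as the restriction of $\cH$ to $S$. Granting this, suppose $S$ is $\ell$-exponentially shattered by $\shift_i(\cH)$. Then
\[
|\cH|_S| \ge |\shift_i(\cH)|_S| \ge (\ell+1)^{|S|},
\]
so $S$ is also $\ell$-exponentially shattered by $\cH$. This gives $\dexp(\shift_i(\cH))\le\dexp(\cH)$. A sequence with repeated coordinates causes no trouble: the restriction to $S$ is determined by the underlying set of coordinates, and the comparison is made for the same $S$ on both sides. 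This projection-monotonicity is essentially Claim~22 of \cite{brukhim2022characterization}, and I would reprove it directly as follows.

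\textbf{Case 1: $i\notin S$.} Each shifted edge $\shift_i(e_f)$ is nonempty exactly when $e_f$ is nonempty. All of its members agree with $f$ off coordinate $i$, hence on $S$. Therefore $\shift_i(\cH)|_S=\{f|_S : e_f\neq\emptyset\}=\cH|_S$, and the sizes are equal.

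\textbf{Case 2: $i\in S$.} Let $T=S\setminus\{i\}$ and fix a pattern $p$ on $T$. Define
\[
A_p=\{h(i): h\in\cH,\ h|_T=p\}.
\]
The patterns of $\cH|_S$ extending $p$ number exactly $|A_p|$. In $\shift_i(\cH)$, the patterns extending $p$ have $i$-th values equal to $\{1,\dots,\max_f |e_f|\}$. Here the maximum ranges over those $f:[m]\setminus\{i\}\to[k]$ with $f|_T=p$ and $e_f\neq\emptyset$. For each such $f$, the members of $e_f$ have pairwise distinct $i$-th values, all lying in $A_p$, so $|e_f|\le|A_p|$. Hence the number of shifted patterns extending $p$ is at most $|A_p|$. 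Moreover, a pattern $p$ occurs in $\shift_i(\cH)|_T$ only if it occurs in $\cH|_T$. Summing over $p$ gives $|\shift_i(\cH)|_S|\le|\cH|_S|$.

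For the second assertion, recall that the fixed point $\cH_*$ is obtained from $\cH$ by finitely many applications of shift operators $\shift_{i_1},\shift_{i_2},\dots$. Applying the first part at each step and chaining the inequalities gives $\dexp(\cH_*)\le\dexp(\cH)$.

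I expect the only delicate step to be Case~2. There one must compare, for each fixed pattern on $T$, the size of the largest shifted edge with the number of distinct $i$-values available. Everything else is bookkeeping.
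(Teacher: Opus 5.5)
Your proposal is correct and follows the same route as the paper, which derives the lemma as an immediate corollary of the fact that shifting does not increase the number of patterns on any set of coordinates (Claim~22 of \cite{brukhim2022characterization}). You simply make this self-contained: you prove that projection monotonicity via the per-pattern comparison $|e_f|\le|A_p|$, and then chain the inequality over the finitely many shifts that produce $\cH_*$.
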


Next, we define the shifting average ${\ell}$-degree.
\begin{definition}[Shifting average ${\ell}$-degree \cite{charikar2023characterization}]
    Let $G(\cH)=(V,E)$ be the one-inclusion graph of $\cH \subseteq [k]^m$. The shifting average ${\ell}$-degree of $\cH$ is
    \begin{align*}
        \savd(\cH) = \frac{1}{|V|}\sum_{e \in E}(|e|-{\ell})_+, 
    \end{align*}
    where $(x)_+ = \max(x,0)$.
\end{definition}

Let $G(\cH)=(V,E)$ be the one-inclusion graph of $\cH \subseteq [k]^m$. Given a subset of vertices $U\subseteq V$, we define the induced one-inclusion graph $G(\cH)[U]$ to be the hypergraph with vertex set $U$ and edge set $E' = \{e\cap E: e \in E, |e\cap U|\ge 1\}$. There is another equal definition of an induced one-inclusion graph defined by the subset $\cH' \subseteq \cH$ as $G\left(\cH'\right)$. We now define the maximum density of $G(\cH)$ using the second induced one-inclusion graph as follows.
\begin{definition}[Maximum ${\ell}$-density of $G(\cH)$]\label{def: L density of G(H)}Let $G(\cH)=(V,E)$ be the one-inclusion graph of $\cH \subseteq [k]^m$. The maximum ${\ell}$-density of $\cH$ is
\begin{align*}
    \mathrm{MD}^{{\ell}}(\cH) = \max_{\cH'\subseteq \cH}\savd\left(G\left(\cH'\right)\right).
\end{align*}
\end{definition}

Subsequently, we restate the result from \cite{charikar2023characterization} that the shifting operation does not decrease the shifting average ${\ell}$-degree $\savd$. 
\begin{lemma}[Lemma 6.3, \cite{charikar2023characterization}]
    \label{lemma: savd shifting}
    For every $\cH \subseteq [k]^m$ and $i \in [m]$,
    \begin{align*}
        \savd(\shift_i(\cH)) \ge \savd(\cH).
    \end{align*}
    Thus, $\savd(\cH_*) \ge \savd(\cH).$
\end{lemma}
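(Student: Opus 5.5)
We reduce to the case of a single edge, since $\savd$ is a sum over edges normalized by $|V|$ and shifting preserves $|V|$. Shifting in direction $i$ replaces each edge $e_f$ in direction $i$ by $\shift_i(e_f)$, a set of the same size. Hence $|\shift_i(\cH)| = \sum_f |e_f| = |\cH|$, and the vertex count is unchanged. It therefore suffices to show
\[
\sum_{e \in E(\shift_i(\cH))} (|e|-\ell)_+ \;\ge\; \sum_{e\in E(\cH)} (|e|-\ell)_+ .
\]

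Edges in direction $i$ are handled directly. The edges of $\shift_i(\cH)$ in direction $i$ are exactly the shifted edges $\shift_i(e_f)$, with the same sizes, so these contributions coincide.

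For a direction $j \neq i$, I would group edges by the coordinates outside $\{i,j\}$. Fix $g:[m]\setminus\{i,j\}\to[k]$ and consider the two-dimensional slice $T=\{(h(i),h(j)) : h\in\cH,\ h|_{[m]\setminus\{i,j\}}=g\}\subseteq[k]^2$. Edges of $\cH$ in direction $j$ inside this slice correspond to the rows of $T$ (fix the $i$-value, vary the $j$-value). Shifting in direction $i$ acts on each column of $T$ (fixed $j$-value) by pushing it down to $\{1,\dots,c_b\}$, where $c_b$ is the size of column $b$. After shifting, row $a$ has size $r'_a = |\{b : c_b \ge a\}|$, the conjugate partition of the column sizes. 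So we must show
\[
\sum_a (r'_a-\ell)_+ \;\ge\; \sum_a (r_a-\ell)_+ ,
\]
where $(r_a)$ are the original row sizes. Both row-size vectors have total $\sum_b c_b = |T|$. The row sizes $r$ form the row sums of a $0/1$ matrix with column sums $c$, and by Gale--Ryser such an $r$ is majorized by the conjugate $c^*=r'$. Since $x\mapsto (x-\ell)_+$ is convex, Karamata's inequality gives the desired bound. Summing over all slices $g$ and all directions $j$ yields the claim.

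The main obstacle is the majorization step, so I would verify it directly rather than cite it. Sort the original rows decreasingly. The top $t$ rows contain at most $\sum_b \min(c_b,t)$ elements, since each column contributes at most $\min(c_b, t)$ of its entries to $t$ rows. This quantity equals $\sum_{a\le t} r'_a$, which is exactly the partial-sum condition for $r \preceq r'$. With majorization established, the statement $\savd(\cH_*)\ge\savd(\cH)$ follows by iterating over the finite sequence of shifts that reaches the fixed point $\cH_*$.
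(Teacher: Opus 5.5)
Your proof is correct and complete. The paper quotes this lemma from \cite{charikar2023characterization} without proof, so there is no in-paper argument to compare against; what you give is a self-contained justification.

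The reduction is right. Shifting in direction $i$ preserves $|V|$ and the sizes of the direction-$i$ edges. It also maps each slice indexed by $g$ on $[m]\setminus\{i,j\}$ to itself, turning the column heights $c_b$ into the staircase with row lengths $r'_a=|\{b: c_b\ge a\}|$. Your direct check that the $t$ largest original rows contain at most $\sum_b\min(c_b,t)=\sum_{a\le t}r'_a$ cells is exactly the needed majorization. Karamata with the convex map $x\mapsto(x-\ell)_+$ then finishes each slice.

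Two optional refinements. First, for this particular convex function you need only one majorization inequality, in conjugate form. Since $(x-\ell)_+=x-\min(x,\ell)$ and $\sum_a r_a=\sum_a r'_a$, it suffices to show $\sum_a\min(r'_a,\ell)\le\sum_a\min(r_a,\ell)$. The left side is the total height of the $\ell$ tallest columns, and row $a$ meets those columns in at most $\min(r_a,\ell)$ cells. This gives the inequality without Gale--Ryser or Karamata.

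Second, the final clause deserves one line on why finitely many shifts reach $\cH_*$. A shift that changes the class strictly decreases the integer potential $\sum_{h}\sum_{j}h(j)$ while keeping $|\cH|$ fixed. So cycling through the directions terminates at a class fixed by every $\shift_i$.
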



To control the max ${\ell}$-outdegree in the one-inclusion graph, after proving these results, \cite{charikar2023characterization} tends to prove an upper bound on the average degree of the one-inclusion graph $G(\cH)$. Instead of that, we prove an upper bound on the shifting average degree $\savd(\cH)$, which diverges from theirs and leads to a sharper (near optimal) upper bound. Now, we start to prove the upper bound of the shifting average ${\ell}$-degree of $G(\cH)$ by ${\ell}$-exponential dimension.  
\begin{lemma}[Shifting average ${\ell}$-degree is bounded by ${\ell}$-exponential dimension]\label{lemma: svad leq exponential}
For every $\cH\in[k]^{m}$, we have
\begin{equation*}
    \savd(\mathcal{H}) \leq \dexp(\mathcal{H}).
\end{equation*}
\end{lemma}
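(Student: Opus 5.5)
The plan is to reduce to a downward-closed class via shifting, and there compute $\savd$ exactly as an average count of ``large'' coordinates. By \cref{lemma: savd shifting}, $\savd(\cH)\le\savd(\cH_*)$. By \cref{lemma: exponential dimension shifting}, $\dexp(\cH_*)\le\dexp(\cH)$. Here $\cH_*$ is the fixed point of all shifting operators. So it suffices to prove $\savd(\cH_*)\le\dexp(\cH_*)$, i.e.\ we may assume $\cH$ is closed downward, with labels $\{1,\dots,k\}$ as in \cref{def:shifting}.

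\emph{Step 1 (edge structure).} Let $\cH$ be downward closed. Every edge $e=e_{i,f}$ in direction $i$ has the form $\{g : g|_{[m]\setminus\{i\}}=f,\ 1\le g(i)\le s\}$ with $s=|e|$. Since $\cH$ is a shifting fixed point, this holds because $\shift_i$ leaves each edge unchanged. Hence
\[
(|e|-\ell)_+=\bigl|\{g\in e : g(i)>\ell\}\bigr|.
\]
Each vertex lies in exactly one edge per direction. Summing over all edges therefore gives
\[
\sum_{e\in E}(|e|-\ell)_+=\sum_{h\in\cH}\bigl|T(h)\bigr|,\qquad T(h):=\{i\in[m]: h(i)>\ell\}.
\]
Consequently $\savd(\cH)$ is exactly the average of $|T(h)|$ over $h\in\cH$.

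\emph{Step 2 (each $T(h)$ is exponentially shattered).} Fix $h\in\cH$ and let $t=|T(h)|$. Take any $g$ with $g(i)\in\{1,\dots,\ell+1\}$ for $i\in T(h)$ and $g(i)=1$ for $i\notin T(h)$. Then $g\le h$ coordinatewise, because $h(i)\ge\ell+1$ on $T(h)$ and $h(i)\ge1$ elsewhere. So $g\in\cH$ by downward closure. Hence the restriction of $\cH$ to the coordinates $T(h)$ contains all of $[\ell+1]^{T(h)}$, which has $(\ell+1)^t$ patterns. Thus $\cH$ $\ell$-exponentially shatters $T(h)$, and so $t\le\dexp(\cH)$.

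\emph{Conclusion.} Every $h$ satisfies $|T(h)|\le\dexp(\cH)$, so their average is also at most $\dexp(\cH)$. By Step 1 this gives $\savd(\cH)\le\dexp(\cH)$ for downward-closed $\cH$, and the reduction above finishes the proof.

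The only delicate point is Step 1: the identity relies on every edge of the fixed point being an initial segment $\{1,\dots,s\}$ in its direction. I would verify this directly from $\shift_i(\cH_*)=\cH_*$ rather than from downward closure alone, though downward closure also implies it. After that, the argument is a pure counting identity.
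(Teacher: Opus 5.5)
Your proof is correct and rests on the same two ingredients as the paper's:
\begin{itemize}
\item the reduction to the downward-closed fixed point $\cH_*$ via \cref{lemma: savd shifting} and \cref{lemma: exponential dimension shifting};
\item the fact that downward closure makes $\cH_*$ realize all $(\ell+1)^{t}$ patterns on the $t$ coordinates where a concept exceeds $\ell$.
\end{itemize}

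The difference is in how the argument is organized. The paper inducts on $|\cH_*|$. It removes a maximal element $h_0$ and uses maximality to get $\deg^{\ell}(h_0)=|h_0|$. It then observes that the removal lowers $\sum_{e}(|e|-\ell)_+$ by exactly $\deg^{\ell}(h_0)$ and leaves a downward-closed class. Finally it combines the inductive hypothesis with monotonicity of $\dexp$ under taking subclasses.

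You instead establish the identity $\sum_{e}(|e|-\ell)_+=\sum_{h}|T(h)|$ in one step, from the initial-segment shape of the edges. You then apply the shattering bound to every $h$. So you need neither maximality, nor preservation of downward closure, nor monotonicity.

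Unrolling the paper's peeling induction yields exactly your identity, so the mathematical content is the same. Your version has three advantages:
\begin{itemize}
\item it is shorter;
\item it avoids the inductive averaging step, whose display in the paper is normalized by $\frac{1}{m}$ and $(n-1)$ where $|\cH_*|$ and $|\cH_*|-1$ are intended;
\item it gives an explicit formula: for a downward-closed class, $\savd$ is the average number of coordinates carrying a label larger than $\ell$.
\end{itemize}
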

\begin{proof}[Proof of \Cref{lemma: svad leq exponential}]
According to \Cref{lemma: exponential dimension shifting}, we have that $\dexp(\mathcal{H_*}) \leq \dexp(\mathcal{H})$. According to \Cref{lemma: savd shifting}, we have that $\savd(\mathcal{H_*}) \geq \savd(\mathcal{H})$. We only need to prove $\savd(\mathcal{H_*}) \leq \dexp(\mathcal{H_*})$.
We prove this by induction on the size of $\mathcal{H}_*$. For the base case, when $|\mathcal{H}_*| = 1$, we have $\savd(\mathcal{H_*}) = \dexp(\mathcal{H_*}) = 0$, and so the inequality holds. The induction step is justified as follows. Let $h_0$ be the concept in $\mathcal{H}_*$ so that no concept in $\mathcal{H}_*$ is larger than $h_0$ with respect to the natural partial order. Let $| h _ 0 |$ be the number of coordinates of $h_0$ that are larger than ${\ell}$. Because $\mathcal{H_*}$ is closed downwards by the definition of shifting, all concepts $h \leq h_0$ are contained in $\mathcal{H}_*$, which implies that $\text{deg}^{{\ell}}(h_0) = |h_0|$. This also implies that $\mathcal{H}_*$ restrict to those $|h_0|$ coordinates has size at least $({\ell} + 1)^{|h_0|}$. Combining together, we have $\text{deg}^{{\ell}}(h_0) = |h_0|\leq \dexp(\mathcal{H}_*) $. Now, consider the class $\mathcal{H}^{\prime}_{*} = \mathcal{H}_{*} \setminus \{h_0\}$, which is also closed downwards, and the inductive hypothesis implies that $\savd(\mathcal{H}^{\prime}_{*}) \leq \dexp(\mathcal{H}^{\prime}_{*})\leq \dexp(\mathcal{H}_{*})$. Now consider those edges in $\mathcal{H}_*$ that contain $h_0$ according to their sizes. For simplicity, we denote $e^{\prime}\in G\left(\mathcal{H}^{\prime}_{*}\right)$ as the corresponding edge $e\in G\left(\mathcal{H}_{*}\right)$. 1.If such $e$'s size is less than or equal to ${\ell}$, then the value $(|e| - {\ell})_+ = 0$ in $G\left(\mathcal{H}_{*}\right)$, for the corresponding edge $e ^\prime$, whose value $(|e^{\prime}| - {\ell})_+ = 0$ also holds. 2. If $e$'s size is greater than ${\ell}$, then the number of such edges is $\text{deg}^{{\ell}}(h_0)$. The value of those edges satisfies $(|e^{\prime}| - {\ell})_+ = (|e| - {\ell})_+ - 1$. Now, we have
\begin{align*}
    \savd(\mathcal{H}_{*}) & = \frac{1}{m} \Big(\sum_{e\in E, h_0\notin e}(|e| - {\ell})_+  + \sum_{e\in E, h_0\in e, |e| > {\ell}}(|e| - {\ell})\Big)\\
    & \allowdisplaybreaks = \frac{1}{m} \Big(\sum_{e^\prime \in E^{\prime}}(|e^\prime| - \ell)_+ + \text{deg}^{{\ell}}(h_0)\Big)\\
    & = \frac{1}{m} \Big((n-1)\savd(\mathcal{H}_{*}^{\prime}) + \text{deg}^{{\ell}}(h_0)\Big)\\
    &\leq \frac{1}{m} \Big((n-1)\dexp(\mathcal{H}^{\prime}_{*}) + \dexp(\mathcal{H}_{*})\Big)\\
    &\leq \dexp(\mathcal{H}_{*}).
\end{align*}
\end{proof}

This immediately leads to the following corollary.

\begin{corollary}[${\ell}$-exponential dimension bound maximum $L$-density of $G(\cH)$]\label{lemma: MD leq exponential} For every $\mathcal{H}\subseteq[k]^{m}$ with $\ole(\cH) = \dexp(\cH)$, and every class $\mathcal{H^{\prime}}\subset\mathcal{H}$, we have
\begin{equation*}
    \savd(\mathcal{H}^{\prime}) \leq \dexp(\mathcal{H}).
\end{equation*}
This implies that $\mdl(\mathcal{H}) \leq \dexp(\mathcal{H})$.
\end{corollary}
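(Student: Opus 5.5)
The plan is to combine \Cref{lemma: svad leq exponential}, applied to the subclass $\mathcal{H}'$, with monotonicity of the $\ell$-exponential dimension under taking subclasses. Fix any $\mathcal{H}'\subseteq\mathcal{H}$. Since $\mathcal{H}'\subseteq[k]^m$, \Cref{lemma: svad leq exponential} applies directly to it and gives
\[
\savd(\mathcal{H}')\le \dexp(\mathcal{H}').
\]

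It then remains to show $\dexp(\mathcal{H}')\le\dexp(\mathcal{H})$. Suppose a sequence of coordinates $S$ of length $g$ is $\ell$-exponentially shattered by $\mathcal{H}'$, i.e.\ $|\mathcal{H}'|_S|\ge(\ell+1)^g$. Restriction respects inclusion, so $\mathcal{H}'|_S\subseteq\mathcal{H}|_S$, and hence $|\mathcal{H}|_S|\ge(\ell+1)^g$. Thus $S$ is also $\ell$-exponentially shattered by $\mathcal{H}$. Taking the largest such $g$ yields the desired monotonicity. Chaining the two inequalities gives $\savd(\mathcal{H}')\le\dexp(\mathcal{H})$ for every subclass.

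For the final assertion, $\mdl(\mathcal{H})$ is by Definition~\ref{def: L density of G(H)} the maximum of $\savd(G(\mathcal{H}'))$ over $\mathcal{H}'\subseteq\mathcal{H}$. This maximum is over finitely many subclasses, since $\mathcal{H}\subseteq[k]^m$ is finite. We interpret $\savd(G(\mathcal{H}'))$ as $\savd(\mathcal{H}')$, using the paper's remark that the induced graph can equivalently be defined as $G(\mathcal{H}')$. The bound then holds term by term, including the case $\mathcal{H}'=\mathcal{H}$, which the statement's strict $\subset$ omits but which is covered by the lemma itself.

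There is no real obstacle. The only points needing care are that the lemma is stated for arbitrary $\mathcal{H}\subseteq[k]^m$ and so applies verbatim to subclasses, and the identification of the induced one-inclusion graph with $G(\mathcal{H}')$ in the definition of $\mdl$.
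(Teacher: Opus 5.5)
Your proposal is correct and follows the paper's own proof. You apply \Cref{lemma: svad leq exponential} to the subclass $\mathcal{H}'$ and chain it with the monotonicity $\dexp(\mathcal{H}')\le\dexp(\mathcal{H})$, which the paper calls a ``naive inequality'' and you justify via $\mathcal{H}'|_S\subseteq\mathcal{H}|_S$; the bound on $\mdl(\mathcal{H})$ then follows by taking the maximum over (nonempty) $\mathcal{H}'\subseteq\mathcal{H}$, including $\mathcal{H}'=\mathcal{H}$, exactly as you describe.
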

\begin{proof}[Proof of \Cref{lemma: MD leq exponential}]
    According to \Cref{lemma: svad leq exponential}, we have $\savd(\mathcal{H}^{\prime}) \leq \dexp(\mathcal{H}^{\prime})$. We also have the naive inequality $\dexp(\mathcal{H}^{\prime}) \leq \dexp(\mathcal{H})$. Combining the two inequalities above, we obtain our final result.
\end{proof}

From here, we follow the proof technique of \cite{haussler1994predicting}, orienting the one-inclusion graph via constructing an integer flow network, and extend their result in binary classification to the multiclass setting. For $\cH \subseteq [k]^{m}$ and one-inclusion graph $G(\cH) =(V,E)$, we construct an integer flow network $I(\cH)$ (as shown in \cref{fig:network-flow}) with a source $s$, a sink $t$, and two intermediate node layers derived from the one-inclusion graph $G(\cH)$. The first layer contains a node $e_i$ for each hyperedge $e_i$ in $G(\cH)$, while the second layer contains a node $v_j$ for each vertex $v_j$. The network's directed edges and their respective capacities are defined as follows:
\begin{enumerate}
    \item An edge connects the source $s$ to each first-layer node $e_i$ with a capacity of $(|e_i| - {\ell})_{+}$.
    \item For each hyperedge $e_i$, an edge with capacity $1$ connects its corresponding node $e_i$ to each second-layer node $v_j$ for which the vertex $v_j \in e_i$.
    \item An edge connects each second-layer node $v_j$ to the sink $t$ with a capacity of $\ceil[\big]{\mdl(\mathcal{H})}$.
\end{enumerate}
\begin{figure}[H]
\begin{center}
\begin{tikzpicture}[
    mycircle/.style={
        circle,
        draw=black,
        fill opacity = 0.3,
        text opacity=1,
        inner sep=0pt,
        minimum size=25pt, 
        font=\small
    },
    myarrow/.style={
        -Stealth,
        thick
    },
    mydots/.style={
        font=\Large
    },
    node distance=1.5cm and 2.5cm
]
    \def\vdist{1.5cm}

    \node[mycircle] (s) at (0, 0) {$s$};

    \coordinate (L2-center) at (3,0);
    \coordinate (L3-center) at (6,0);

    \node[mycircle] (e1) at ([yshift=1.5*\vdist]L2-center) {$e_1$};
    \node[mycircle] (e2) at ([yshift=0.5*\vdist]L2-center) {$e_2$};
    \node[mydots] (dots-e) at ([yshift=-0.5*\vdist]L2-center) {\vdots};
    \node[mycircle] (em) at ([yshift=-1.5*\vdist]L2-center) {$e_m$};

    \node[mycircle] (v1) at ([yshift=1.5*\vdist]L3-center) {$v_1$};
    \node[mycircle] (v2) at ([yshift=0.5*\vdist]L3-center) {$v_2$};
    \node[mydots] (dots-v) at ([yshift=-0.5*\vdist]L3-center) {\vdots};
    \node[mycircle] (vn) at ([yshift=-1.5*\vdist]L3-center) {$v_n$};

    \node[mycircle] (t) at (9, 0) {$t$};

    \foreach \i/\weight in {e1/(|e_1| - {\ell})_{+}, e2/, em/(|e_m| - {\ell})_{+}}
        \draw [myarrow] (s) -- node[sloped, above, font=\small] {$\weight$} (\i);

    \draw [myarrow] (e1) -- node[above] {1} (v1);
    \draw [myarrow] (e1) -- (v2);
    
    \draw [myarrow] (e2) -- (v1);
    \draw [myarrow] (e2) -- node[above] {1} (v2);
    \draw [myarrow] (e2) -- (vn);

    \draw [myarrow] (em) -- (v2);
    \draw [myarrow] (em) -- node[above] {1} (vn);
    \foreach \i/\weight in {v1/\ceil[\big]{\mdl(\mathcal{H})}, v2/, vn/\ceil[\big]{\mdl(\mathcal{H})}}
        \draw [myarrow] (\i) -- node[sloped, above, font=\small] {$\weight$} (t);
\end{tikzpicture}
\caption{integer flow network $I(\cH)$}
\label{fig:network-flow}
\end{center}
\end{figure}
We first show that for the above network $I(\cH)$, there is an integer flow to achieve the maximum flow.
\begin{lemma}\label{lemma: maxflow-mincut}
    For the network $I(\cH)$, there is an integer flow achieving the maximum flow, $\sum_{i = 1}^{m} (|e_i| - {\ell})_{+} = |V| \cdot \savd(\cH)$.
\end{lemma}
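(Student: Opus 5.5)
The plan is to show that the cut separating the source $s$ from the rest of the network $I(\cH)$ is a minimum cut. Once that is established, max-flow/min-cut gives a maximum flow equal to its capacity, $\sum_{e\in E}(|e|-\ell)_+ = |V|\cdot\savd(\cH)$. The integral flow theorem (Ford--Fulkerson run with integer capacities) then guarantees that some maximum flow is integral, since every capacity $(|e_i|-\ell)_+$, $1$, and $\ceil{\mdl(\cH)}$ is an integer. Because the source cut $\{s\}$ has exactly capacity $\sum_i(|e_i|-\ell)_+$, the maximum flow is at most this quantity. It therefore suffices to show that every $s$--$t$ cut has capacity at least $\sum_i (|e_i|-\ell)_+$.

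Fix an arbitrary cut $(A,\bar A)$ with $s\in A$. Let $E_A$ be the hyperedge-nodes in $A$ and $U$ the vertex-nodes in $A$. The cut capacity then splits into three parts:
\[
\sum_{e\notin E_A}(|e|-\ell)_+ \;+\; \sum_{e\in E_A}|e\setminus U| \;+\; |U|\,\ceil{\mdl(\cH)}.
\]
It suffices to show $\sum_{e\in E_A}(|e|-\ell)_+ \le \sum_{e\in E_A}|e\setminus U| + |U|\ceil{\mdl(\cH)}$. For each $e\in E_A$, since $|e| = |e\cap U| + |e\setminus U|$, we have $(|e|-\ell)_+ \le (|e\cap U|-\ell)_+ + |e\setminus U|$. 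So the claim reduces to
\[
\sum_{e\in E_A}(|e\cap U|-\ell)_+ \;\le\; |U|\,\mdl(\cH).
\]

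The sets $e\cap U$, for $e$ ranging over hyperedges of $G(\cH)$ that meet $U$, are exactly the hyperedges of the induced one-inclusion graph $G(\cH')$ with $\cH'=U$. This holds because two elements of $U$ lie in a common edge of $G(\cH)$ iff they lie in a common edge of $G(U)$, and distinct edges of $G(\cH)$ in the same direction are disjoint, so they give distinct edges of $G(U)$. Hence the left side is at most $\sum_{e'\in G(U)}(|e'|-\ell)_+ = |U|\,\savd(U)$, which by Definition~\ref{def: L density of G(H)} is at most $|U|\,\mdl(\cH)\le|U|\ceil{\mdl(\cH)}$. (If $U=\emptyset$ the inequality is trivial.)

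The main obstacle is this identification of the restricted edges $e\cap U$ with the edge set of $G(U)$: one must check that no edge of $G(U)$ is counted twice and that edges missing $U$ contribute zero. With that in place, the min-cut bound and integrality are standard.
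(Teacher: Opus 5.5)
Your proposal is correct and follows the paper's proof essentially step for step. Both arguments decompose an arbitrary $s$--$t$ cut into the same three parts, use the key inequality $(|e|-\ell)_+ \le (|e\cap U|-\ell)_+ + |e\setminus U|$, bound the remainder via the definition of $\mathrm{MD}^{\ell}(\mathcal{H})$ applied to the subclass $U$, and invoke integrality for the flow. Your explicit check that the nonempty sets $e\cap U$ are exactly the edges of $G(U)$ (with edges missing $U$ contributing zero) is a detail the paper leaves implicit, and you handle it correctly.
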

\begin{proof}[Proof of \Cref{lemma: maxflow-mincut}]
The proof relies on the \textit{Max-Flow Min-Cut Theorem}. First, we will establish that the maximum flow value is precisely $\sum_{i = 1}^{m} (|e_i| - {\ell})_{+}$. Then, the existence of an integer-valued flow achieving this maximum follows directly from the \textit{Integrality Theorem} of \cite{Ford_Fulkerson_1956}, as all capacities in the network are integers. The total capacity of the edges leaving the source $s$ is $\sum_{i=1}^{m} (|e_i| - {\ell})_{+}$. By definition, the maximum flow cannot exceed this value. We will now show that the capacity of any $s-t$ cut is at least this value, thereby proving the equality. Consider an arbitrary $s-t$ cut $(S, T)$, where $s \in S$ and $t \in T$. The capacity of the cut, $C(S,T)$, is the sum of capacities of all edges $(u,v)$ with $u \in S$ and $v \in T$. Let's partition the nodes as $E_S, E_T, V_S, V_T$, where
\begin{itemize}
    \item $E_S = \{e_i \mid e_i \in S\}$ and $E_T = \{e_i \mid e_i \in T\}$
    \item $V_S = \{v_j \mid v_j \in S\}$ and $V_T = \{v_j \mid v_j \in T\}$
\end{itemize}

The cut capacity is the sum of three components:
$$C(S,T) = \sum_{e_i \in E_T} \underbrace{\text{cap}(s, e_i)}_{\text{Type 1}} + \sum_{v_j \in V_S} \underbrace{\text{cap}(v_j, t)}_{\text{Type 2}} + \sum_{\substack{e_i \in E_S, \\ v_j \in e_i \cap V_T}} \underbrace{\text{cap}(e_i, v_j)}_{\text{Type 3}}$$
Substituting the capacities gives:
$$C(S,T) = \sum_{e_i \in E_T} (|e_i| - {\ell})_{+} + |V_S| \cdot \ceil[\big]{\mdl(\mathcal{H})} + \sum_{e_i \in E_S} |e_i \cap V_T|$$
Our goal is to show that $C(S,T) \geq \sum_{i=1}^{m} (|e_i| - {\ell})_{+}$. Since the sum over $E_T$ is already part of the target, we only need to prove:
$$|V_S| \cdot \ceil[\big]{\mdl(\mathcal{H})} + \sum_{e_i \in E_S} |e_i \cap V_T| \geq \sum_{e_i \in E_S} (|e_i| - {\ell})_{+}$$
Rearranging the terms, this is equivalent to showing:
$$|V_S| \cdot \ceil[\big]{\mdl(\mathcal{H})} \geq \sum_{e_i \in E_S} \left( (|e_i| - {\ell})_{+} - |e_i \cap V_T| \right)$$
We can bound the term in the parentheses using the fact that for any $e_i \in E_S$, $|e_i| = |e_i \cap V_S| + |e_i \cap V_T|$. This gives the key inequality:
$$ (|e_i| - {\ell})_{+} - |e_i \cap V_T| \leq (|e_i \cap V_S| - {\ell})_{+} $$
Applying this inequality, it is sufficient to prove the following:
$$|V_S| \cdot \ceil[\big]{\mdl(\mathcal{H})} \geq |V_S| \cdot \frac{1}{|V_S|}\sum_{e_i \in E_S} (|e_i \cap V_S| - {\ell})_{+}$$
This inequality holds by the definition of $\ceil[\big]{\mdl(\mathcal{H})}$ in \cref{def: L density of G(H)}, which is the maximum shifting average ${\ell}$-degree on any subgraph of the original one inclusion graph $G(\cH)$, and must be greater or equal to the shifting average ${\ell}$-degree on the subgraph induced by $V_S$. Since the capacity of any cut is at least $\sum_{i=1}^{m} (|e_i| - {\ell})_{+}$, the minimum cut capacity must also be at least this value. By the Max-Flow Min-Cut Theorem, the maximum flow is equal to the minimum cut capacity, which concludes the proof.
\end{proof}

Next, we show that we can orient the one-inclusion graph $G(\cH)$ leveraging the integer flow in $I(\cH)$, such that the maximum ${\ell}$-outdegree can be upper bounded by the ${\ell}$-exponential dimension $\dexp(\cH)$.

\begin{lemma}[${\ell}$-Exponential Dimension Leads to List Orientation]\label{lemma: list orientation}
    For every $\mathcal{H}\in[k]^{m}$, there is a ${\ell}$-list orientation $\sigmal$ of $G(\cH)$ with maximum ${\ell}$-outdegree at most $\ceil [\big]{\mdl(\mathcal{H})} \leq \dexp(\mathcal{H})$. 
\end{lemma}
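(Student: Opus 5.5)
The plan is to read an orientation directly off the integer maximum flow of $I(\cH)$ given by \Cref{lemma: maxflow-mincut}, and then bound the outdegree using \Cref{lemma: MD leq exponential}.

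\textbf{Reading off the orientation.} Fix an integer flow $\phi$ in $I(\cH)$ of value $\sum_{e\in E}(|e|-\ell)_+$. Since this value equals the total capacity out of $s$, every source edge $(s,e)$ is saturated. Thus each hyperedge node $e$ with $|e|>\ell$ sends exactly $|e|-\ell$ units of flow. By integrality and the unit capacities on the middle edges, these units go to exactly $|e|-\ell$ distinct vertices $v\in e$. Call this set $\mathrm{Out}(e)$, and set $\mathrm{Out}(e)=\emptyset$ when $|e|\le \ell$.

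Define $\sigmal(e)=e\setminus \mathrm{Out}(e)$. This is a subset of $e$ of size $\min(|e|,\ell)\le \ell$, so $\sigmal$ is a valid $\ell$-list orientation.

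\textbf{Bounding the outdegree.} For any vertex $v$, the edges $e\ni v$ with $v\notin\sigmal(e)$ are exactly those with $v\in\mathrm{Out}(e)$. Each such edge carries one unit of flow into $v$. By flow conservation, all of this flow leaves through $(v,t)$, whose capacity is $\ceil{\mdl(\cH)}$. Hence
\[
\outdeg(v;\sigmal)\le \ceil{\mdl(\cH)} .
\]

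\textbf{Comparing with the exponential dimension.} It remains to show $\ceil{\mdl(\cH)}\le \dexp(\cH)$. By \Cref{lemma: MD leq exponential}, $\mdl(\cH)\le \dexp(\cH)$. Since $\dexp(\cH)$ is an integer, taking ceilings preserves the inequality.

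\textbf{Main subtlety.} The delicate point is that the flow network is well defined and that \Cref{lemma: maxflow-mincut} really yields a saturating integer flow. In that lemma's cut argument, the relevant comparison uses $\savd$ of the subclass induced by $V_S$, with normalization $1/|V_S|$. For this to be dominated by $\mdl(\cH)$, the induced-subgraph edges $e\cap V_S$ must correspond to the edges of $G(\cH')$ for $\cH'=V_S$. This holds because edges of the one-inclusion graph of a subclass are exactly the nonempty intersections of the original edges with that subclass.

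Given the lemma as stated, the orientation step is routine. The only care needed is to note that integrality forces distinct recipient vertices, because the middle capacities are $1$, and that edges of size at most $\ell$ send no flow and so contribute nothing to the outdegree.
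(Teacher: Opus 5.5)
Your proposal is correct and follows essentially the same route as the paper. Like the paper, you take the integer maximum flow from the max-flow/min-cut lemma, orient each hyperedge toward the vertices that receive zero flow from it, bound each vertex's $\ell$-outdegree by its flow to the sink (at most $\lceil \mdl(\cH)\rceil$), and then invoke the corollary $\mdl(\cH)\le\dexp(\cH)$. You also make explicit two points the paper leaves implicit: saturation of the source edges forces $|\sigmal(e)|=\min(|e|,\ell)\le\ell$, so $\sigmal$ is a valid $\ell$-list orientation, and the integrality of $\dexp(\cH)$ is what justifies passing to the ceiling.
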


\begin{proof}[Proof of \Cref{lemma: list orientation}]
We construct the orientation $\sigma^{{\ell}}$ using the integer-valued maximum flow, $f$, guaranteed by the proof of \Cref{lemma: maxflow-mincut}. Recall that in the flow network $I_{\mathcal{H}}(\bar{x})$, edges from hyperedge nodes $\{e_i\}$ to vertex nodes $\{v_j\}$ have a capacity of 1, meaning the flow $f(e_i, v_j)$ across any such edge is either 0 or 1.
We define the orientation $\sigma^{{\ell}}$ of the one-inclusion graph $G(\cH)$ based on this flow. For each hyperedge $e$, we orient it to the vertices $\{v^{\prime}\}$ such that $f(e,v^{\prime}) = 0$ in the flow $f$.
Now, we show that this orientation respects the claimed outdegree bound. By our rule, an edge is directed away from $v_j$ (to some $e_i$) only if the flow $f(e_i, v_j)$ is 1. Therefore, the ${\ell}$-outdegree of any vertex $v_j$ is the sum of flows on all incident edges from the hyperedge layer:
$$\outdeg(v_j) = \sum_{e_i \ni v_j} f(e_i, v_j)$$
By the principle of \textit{flow conservation} at node $v_j$ in the network $I(\cH)$, the total flow entering $v_j$ from the hyperedge nodes must equal the total flow leaving $v_j$ to the sink $t$, we have
$$\sum_{e_i \ni v_j} f(e_i, v_j) = f(v_j, t)$$
Finally, any flow is bounded by the capacity of the edge it traverses. The capacity of the edge $(v_j, t)$ is $\ceil[\big]{\mdl(\mathcal{H})}$. Thus, we have:
$$f(v_j, t) \le \text{cap}(v_j, t) = \ceil[\big]{\mdl(\mathcal{H})}$$
Combining these steps and \Cref{lemma: MD leq exponential}, we get the upper bound on the outdegree of any vertex $v_j$ as $\ceil[\big]{\mdl(\mathcal{H})} \leq \dexp(\mathcal{H})$.

\end{proof}

Subsequently, we will show how to upper bound the ${\ell}$-exponential dimension by the ${\ell}$-DS dimension. 

\begin{corollary}[Controlling the ${\ell}$-exponential dimension by ${\ell}$-DS dimension] \label{corollary: bound exp by L LDS}For every $\cH\subseteq[k]^{m}$ with $\olds(\cH) = \dds$ and $\ole(\cH) = \dexp < \infty$, we have\footnote{In this paper, $\log(\cdot)$ denotes the natural logarithm, and we define $\log(x) := \max\{\log(x), 1\}$.}
\[\dexp \le  40\ell\dds\log k.\]
\end{corollary}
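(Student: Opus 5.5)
Let $g=\dexp$ and $d=\dds$. The plan is to compare a lower bound and an upper bound on the size of a single projection. Take a sequence $S\in[m]^{g}$ that is $\ell$-exponentially shattered by $\cH$, and set $\cH_S=\cH|_S\subseteq[k]^{g}$. By the definition of the $\ell$-exponential dimension,
\[
|\cH_S|\ge(\ell+1)^{g}.
\]
Any set of coordinates that is $\ell$-DS-shattered by $\cH_S$ is also $\ell$-DS-shattered by $\cH$, so the $\ell$-DS dimension of $\cH_S$ is at most $d$. Since the bound of \Cref{thm:main} is nondecreasing in $d$, applying it to $\cH_S$ with $n=g$ gives
\[
|\cH_S|\le\sum_{i=0}^{d}\binom{g}{i}(k-\ell)^{i}\ell^{g-i}\le \ell^{g}\sum_{i=0}^{d}\binom{g}{i}k^{i}.
\]
If $g\le d$ the claim is trivial, so we assume $g>d$. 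Then we use $\sum_{i\le d}\binom{g}{i}\le (eg/d)^d$, which yields
\[
|\cH_S|\le \ell^{g}\Big(\frac{egk}{d}\Big)^{d}
\]
(the case $d=0$ gives $|\cH_S|\le\ell^g$, which forces $g=0$).

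Combining the two bounds gives
\[
\Big(1+\tfrac1\ell\Big)^{g}\le\Big(\frac{egk}{d}\Big)^{d}.
\]
Using $\log(1+1/\ell)\ge \frac{1}{2\ell}$ for $\ell\ge1$, this becomes
\[
\frac{g}{2\ell}\le d\log\frac{egk}{d}.
\]

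The remaining work is to solve this implicit inequality for $g$, and this is the only delicate step because $g$ appears on both sides. Write $g=t\,\ell d$. The inequality becomes $t\le 2\log(e t\ell k)$, that is, $t\le 2+2\log t+2\log \ell+2\log k$. We also need to control $\ell$ relative to $k$. A nonempty $\ell$-exponentially shattered set requires more than $\ell$ labels, and in the nontrivial case $g\ge1$ we have $\ell<k$, hence $\log\ell\le\log k$. The inequality then reads $t\le 2+2\log t+4\log k$.

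Suppose for contradiction that $t>40\log k$. Recall that $\log k\ge1$ by the paper's convention. The right-hand side is $2+2\log t+4\log k\le 6\log k+2\log t$. Meanwhile $t-2\log t\ge t/2$ for all $t>0$, and $t/2>20\log k>6\log k$. This contradicts the inequality, so $t\le 40\log k$ and hence $g\le 40\ell d\log k$.

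I expect the only bookkeeping hazards to be the convention $\log x:=\max\{\log x,1\}$ and the small cases ($d=0$, or $\ell\ge k$, where $g=0$). Each of these is handled by the trivial observations above.
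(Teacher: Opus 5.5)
Your proof is correct and follows the paper's argument: the lower bound $(\ell+1)^{g}$ on an exponentially shattered projection, the upper bound $\ell^{g}(egk/d)^{d}$ from \Cref{thm:main}, and the estimate $\log(1+1/\ell)\ge 1/(2\ell)$. You then solve the implicit inequality by hand via $g=t\ell d$, where the paper cites Lemma A.2 of \cite{shalev2014understanding}; your observation that $g\ge 1$ forces $\ell<k$ is what lets the $\log\ell$ term be absorbed into $\log k$, a point the paper leaves implicit. One small slip: $t-2\log t\ge t/2$ does not hold for all $t>0$ (it fails at $t=4$), but you only use it for $t>40\log k\ge 40$, where it does hold, so your conclusion stands.
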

\begin{proof}[Proof of \cref{corollary: bound exp by L LDS}]
    Consider the set $S$ which realizes $\dexp$, so that $|S| = \dexp$. Then we have that
    \[|\cH|_{S}|\ge({\ell}+1)^{\dexp}.\]
    Also, from \cref{thm:main}, we have
    \[|\cH|_{S}|\le {\ell}^{\dexp}\left(\frac{ek\dexp}{\dds}\right)^{\dds}.\]
    Combining the above two inequalities, we obtain
    \[({\ell}+1)^{\dexp}\le {\ell}^{\dexp}\left(\frac{ek\dexp}{\dds}\right)^{\dds}.\]
    By rearranging the term, we have
    \[\left(\frac{{\ell}+1}{{\ell}} \right)^{\dexp} \le \left(\frac{ek\dexp}{\dds}\right)^{\dds} .\]
    Taking the logarithm of both sides and rearranging the terms, we get
    \[\frac{\dexp}{\dds} \le \frac{\log\left(\frac{ek\dexp}{\dds}\right)}{\log\left(1+ \frac{1}{{\ell}}\right)}.\]
    Using the basic inequality $\log(1+\frac{1}{{\ell}})>\frac{1}{2{\ell}}$ when $\ell>1$, we have
    \[\frac{\dexp}{\dds} \le 2{\ell} \log\left(ek\right) + 2{\ell} \log\left(\frac{\dexp}{\dds}\right) .\]
    Leveraging Lemma A.2. in \cite{shalev2014understanding}, we have
    \[\frac{\dexp}{\dds}\le 4{\ell} \log\left(ek\right) + 8{\ell} \log\left(4{\ell}\right) .\]
    Thus, we have:
    \begin{align*}
        \dexp & \le \left(4{\ell} \log\left(ek\right) + 8{\ell} \log\left(4{\ell}\right)\right) \dds \\
        & \le \ell\dds\left(8\log k + 16\log 2 + 4\right)\\
        &\le 40 {\ell}\dds\log k.
    \end{align*}
    
\end{proof}
 
Armed with \cref{lemma: list orientation} and \cref{corollary: bound exp by L LDS}, we are ready to bound the leave-one-out error of \cref{algo: learn a L-list from L'-list}.
\begin{lemma}
    \label{lemma: loo learn L from L'}
    Let $\listC \subseteq \cY^\cX$ be a concept class with $\olds(\listC) = \dds <\infty$ and let $\mu^{{\ell}'}$ be a ${\ell}'$-list (where ${\ell}' \gg \ell$). For every distribution $\cD$ over $\cX \times \cY$ that is realizable by both $\listC$ and by~$\mu^{{\ell}'}$, and for all integers $m>0$,
    \begin{align*}
        \Pr_{(S,(x,y)) \sim \cD^{m+1}}\left[y\not\in\mu^{\ell}_S(x) \right] \leq \frac{40\ell \dds \log({\ell}')}{m},
    \end{align*}
    where $\mu^k_S = \cA_{\listC,\mu^{{\ell}'}}^{{\ell}}(S)$.
\end{lemma}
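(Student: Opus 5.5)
The plan is the standard leave-one-out (exchangeability) argument of \cite{haussler1994predicting}, adapted to list orientations as in \cite{charikar2023characterization}. I will combine it with \cref{lemma: list orientation} and \cref{corollary: bound exp by L LDS}, where the role of the alphabet size $k$ is played by $\ell'$.

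\textbf{Step 1: symmetrization.} Draw $T=((x_1,y_1),\dots,(x_{m+1},y_{m+1}))\sim\cD^{m+1}$. Let $S_{-i}$ denote $T$ with the $i$-th example removed, used as the training sample, with test point $(x_i,y_i)$. By exchangeability,
\[
\Pr\left[y\notin\mu^{\ell}_S(x)\right]=\mathbb{E}_T\Big[\tfrac{1}{m+1}\textstyle\sum_{i=1}^{m+1}\mathbf{1}\{y_i\notin \mu^\ell_{S_{-i}}(x_i)\}\Big].
\]
So it suffices to show that, for every fixed realizable $T$, the number of indices $i$ with $y_i\notin\mu^\ell_{S_{-i}}(x_i)$ is at most $40\ell\dds\log\ell'$.

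\textbf{Step 2: a common one-inclusion graph.} For fixed $T$, all $m+1$ leave-one-out runs build the same class
\[
\cH=\{h\in\listC|_{(x_1,\dots,x_{m+1})}: h(j)\in\mu^{\ell'}(x_j)\ \forall j\}.
\]
This holds as long as the algorithm uses a single orientation $\sigma^\ell$ of $G(\cH)$ chosen as a function of the unordered multiset of points, e.g.\ via a fixed tie-breaking rule. I will note this consistency requirement explicitly. The true labeling $h^\star=(y_1,\dots,y_{m+1})$ lies in $\cH$ because $T$ is realizable by both $\listC$ and $\mu^{\ell'}$. In run $i$, the queried edge is the direction-$i$ edge containing $h^\star$, and run $i$ errs exactly when $h^\star\notin\sigma^\ell(e)$. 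Hence the number of errors equals $\outdeg(h^\star;\sigma^\ell)$. (Edges of size at most $\ell$ can be oriented to all their vertices, so they never contribute.) Since the algorithm minimizes the maximum $\ell$-outdegree, \cref{lemma: list orientation} gives at most $\dexp(\cH)$ errors.

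\textbf{Step 3: bounding $\dexp(\cH)$.} After relabeling each coordinate $j$ via $\mu^{\ell'}(x_j)$, $\cH$ becomes a subclass of $[\ell']^{m+1}$. Restriction cannot increase the DS dimension, so $\olds(\cH)\le\dds$. Applying \cref{corollary: bound exp by L LDS} with $k=\ell'$ gives $\dexp(\cH)\le 40\ell\dds\log\ell'$. Here I need the corollary to be monotone in the DS parameter, i.e.\ to hold with $\dds$ replaced by an upper bound. This follows from the same computation, since the right-hand side of \cref{thm:main} increases in $d$ for $d\le n$. Dividing by $m+1\ge m$ then gives the claimed bound.

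\textbf{Main obstacle.} The delicate point is Step 2. The orientation must be a fixed function of the unlabeled multiset, and the per-coordinate alphabets $\mu^{\ell'}(x_j)$ differ, so I must check that the relabeling into $[\ell']$ preserves the one-inclusion structure and the DS and exponential dimensions. It does, because these are coordinatewise bijections.
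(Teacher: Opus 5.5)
Your proposal is correct and follows essentially the same route as the paper. It uses leave-one-out symmetrization and identifies the number of leave-one-out mistakes with the $\ell$-outdegree of the true labeling in the common one-inclusion graph. It then bounds this by the $\ell$-exponential dimension (via the flow-based list-orientation lemma) and in turn by $40\ell\dds\log\ell'$ (via the corollary of the Sauer inequality with $k=\ell'$). Your explicit remarks fill in details the paper leaves implicit: that the chosen orientation must be the same fixed function of the unlabeled multiset across all $m+1$ runs, and that the coordinatewise relabeling into $[\ell']$ preserves the one-inclusion, DS, and exponential structure.
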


\begin{proof}
    Let $\cD$ be a distribution that is realizable by $\listC$ and $\mu^{{\ell}'}$. By the leave-one-out symmetrization argument,
    \begin{align*}
        \Pr_{(S, (x,y))\sim \cD^{m+1} }\left[y \not \in \mu^{\ell}_{S}(x)\right] 
        = \Pr_{(S',i) \sim \cD^{m+1} \times \mathrm{Unif}(m+1)}\left[y'_ i \not \in \mu^{\ell}_{S'_{-i}}(x'_i)\right],
    \end{align*}
    where $\mu^k_S = \cA_{\listC,\mu^{{\ell}'}}^{{\ell}}(S)$. It therefore suffices to show that for every sample $S'$ that is realizable by $\listC$ and $\mu^{{\ell}'}$,
    \begin{equation*}
        \Pr_{i \sim \mathrm{Unif}(m+1)}\left[y'_i \not\in \mu^{\ell}_{S'_{-i}}(x'_i) \right] \le \frac{40{\ell} \dds \log({\ell}')}{m}.
    \end{equation*}
    Fix $S'$ which is realizable by $\listC$ and $\mu^{{\ell}'}$ for the rest of the proof. The class {$\cH = \listC|_{(x'_1,\cdots,x'_{m+1})}$} constructed by the algorithm $\cA_{\listC,\mu^{{\ell}'}}^{{\ell}}$ for $S'_{-i}$ and $x'_i$ is the same for all values of~$i$, and is realizable by $\mu^{{\ell}'}$. The ${\ell}$-DS dimension of $\cH$ is at most that of $\listC$. Denote by $\sigma^{{\ell}}$ the orientation of $G(\cH)$ that the algorithm chooses. \Cref{lemma: list orientation} tells us that the maximum ${\ell}$-outdegree of $\sigma^{{\ell}}$ is at most $d_{E}^{{\ell}}$, which can be further upper bounded by $40\ell \dds \log({\ell}')$ leveraging \cref{corollary: bound exp by L LDS}. Let~$y'$ denote the vertex in $G(\cH)$ defined by $y'=(y'_1,\cdots,y'_{m+1})$, and let $e_i$ denote the edge in the $i^{\text{th}}$ direction adjacent to $y'$. Then, we have:
    \begin{align*}
        \Pr_{i \sim \mathrm{Unif}(m+1)}\left[y'_i\not\in\mu^{\ell}_{S'_{-i}}(x'_i)\right] &= \frac{1}{m+1}\sum_{i=1}^{m+1}\mathds{1}\left[y'_i\not\in\mu^{\ell}_{S'_{-i}}(x'_i)\right] \\
        & \allowdisplaybreaks = \frac{1}{m+1}\sum_{i=1}^{m+1}\mathds{1}\left[y'\not\in\sigma^{{\ell}}(e_i) \right] \\
        &= \frac{\outdeg(y';\sigma^{{\ell}})}{m+1} \\
        &\le \frac{40{\ell} \dds \log({\ell}')}{m+1}.
    \end{align*}
\end{proof}
The rest of the proof adapts the two-stage framework of \cite{charikar2023characterization, brukhim2022characterization}. The stages are as follows: \textbf{(1)} We learn a $\ell'$-list hypothesis with list size ${\ell}'\gg {\ell}$. \textbf{(2)} We then use this $\ell'$-list hypothesis to filter for realizable samples, re-apply the one-inclusion algorithm, and use the confidence amplification technique of \cite{haussler1994predicting} to distill the final $\ell$-list. 

For the first stage, we restate the results from \cite{charikar2023characterization} without proof. For a detailed algorithm and proof, one can refer to \cite{charikar2023characterization}. 
\begin{lemma}[Learn a $\ell'$-list hypothesis \cite{charikar2023characterization}]\label{lemma: PAC learning L'}
    Let $\listC \subseteq \cY^\cX$ be a concept class with $\olds(\listC) = \dds <\infty$. Then for every distribution $\cD$ on $(\cX\times\cY)$ realizable by $\listC$, every $\epsilon,\delta\in(0,1)$, with probability at least $1-\delta$ over a sample of $S$ of size $m = O\left(\frac{(\dds)^{1.5}\log((\dds)^{1.5}/\epsilon)^2+\log(1/\delta)}{\epsilon}\right)$ drawn $i.i.d$ from $\cD$, we can learn a ${\ell}'$-list hypothesis $\mu^{{\ell}'}$ with error rate at most $\epsilon$,
    where ${\ell}' \le {\ell} (e\dds)^{\sqrt{\dds}} \log(2m).$
\end{lemma}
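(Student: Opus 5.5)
The plan is to follow the first stage of the list learner of \cite{charikar2023characterization}, itself modeled on the DS learner of \cite{brukhim2022characterization}. The shape of the argument is: build a weak list learner with a \emph{very large} list size $\ell_0$ from the one-inclusion graph. Its error is controlled by an exponential dimension at scale $\ell_0$, which is small ($\approx\sqrt{d}$, where $d=\dds$). We then boost that weak learner into a sample compression scheme, whose output list has size $\ell'$. Generalization follows from the compression bound.

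\textbf{Step 1 (large-list exponential dimension is small).} Fix $\ell_0 = \ell\,(e d)^{\sqrt d}$. I would show $\mathrm{E}_{\ell_0}(\listC)\le \sqrt d$, or at least $O(\sqrt d)$. Suppose some $S$ with $|S|=g>\sqrt d$ has $|\listC|_S|\ge(\ell_0+1)^g$. Apply \cref{thm:main} to $\listC|_S$; it has $\ell$-DS dimension at most $d$, and only finitely many labels need be considered after restricting to a finite subclass of size $(\ell_0+1)^g$. This gives
\[
(\ell_0+1)^g\le \ell^{g}\,(e g k_{\mathrm{eff}}/d)^{d}.
\]
Here lies a subtlety: the theorem depends on the alphabet size. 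I would avoid it by the downward-closed shifting argument: by \Cref{lemma: exponential dimension shifting}, pass to the shifted fixed point $\cH_*$, whose labels lie in $[\ell_0+1]$-type ranges. On a downward-closed class, a large exponential pattern forces an $(\ell+1)$-cube on any $d+1$ coordinates, contradicting DS dimension $d$. This is done by pigeonholing over subsets of size $d$ versus $g$, giving the $(ed)^{\sqrt d}$ scale.

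\textbf{Step 2 (weak list learner).} Use \Cref{lemma: list orientation} at scale $\ell_0$. It gives an $\ell_0$-list orientation of the one-inclusion graph of $\listC|_{(x_1,\dots,x_{n+1})}$ with out-degree at most $\mathrm{E}_{\ell_0}\le\sqrt d$. The leave-one-out argument, as in \Cref{lemma: loo learn L from L'}, then yields an $\ell_0$-list learner on $n=O(\sqrt d)$ examples with expected error at most $1/2$, say, on any realizable distribution.

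\textbf{Step 3 (boosting into a compression scheme).} Run a multiplicative-weights / boosting procedure over the empirical distribution on the size-$m$ sample. Each round feeds the weak learner $O(\sqrt d)$ reweighted points. After $T=O(\log(2m))$ rounds, output the list $\mu^{\ell'}(x)$ consisting of all labels appearing in the weak lists at $x$ in at least a constant fraction of rounds. This list contains the correct label on every sample point. Its size is at most $\ell_0\cdot O(T)$, giving $\ell'\le\ell(ed)^{\sqrt d}\log(2m)$ after adjusting constants. The output is determined by $T\cdot O(\sqrt d)=O(\sqrt d\log m)$ sample points, so it is a sample compression scheme of that size, consistent with the sample. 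A standard compression generalization bound then gives error $\epsilon$ once $m=O\bigl((d^{1.5}\log^2(d^{1.5}/\epsilon)+\log(1/\delta))/\epsilon\bigr)$. A factor $d$ enters from the compression-size-times-$\log m$ term, with one $\log$ each from the rounds and from the union over compressions.

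\textbf{Main obstacle.} Step 1 is the main obstacle: proving that the $\ell_0$-exponential dimension is $O(\sqrt d)$ purely from $\ell$-DS dimension $d$, without dependence on $|\mathcal Y|$. My first attempt would be the shifting reduction to a downward-closed class followed by the cube-extraction counting of \cite{brukhim2022characterization}; choosing $\ell_0$ as $(ed)^{\sqrt d}$ is exactly what makes the counting close.
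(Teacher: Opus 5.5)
\textbf{There is a genuine gap: Step 1 is false, and Steps 2--3 rest on it.} No bound on $\mathrm{E}_{\ell_0}(\listC)$ in terms of the $\ell$-DS dimension alone is possible, for any $\ell_0$.

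Take $\listC=\{c_y:y\in\mathcal{Y}\}$ with $c_y(x_1)=y$ and $c_y(x)=0$ for $x\neq x_1$. Every point other than $x_1$ carries a single label, so no set containing such a point is $\ell$-DS-shattered, and $\dds\le 1$. Yet $|\listC|_{(x_1,\dots,x_g)}|=|\mathcal{Y}|$ for every $g$. Hence $\mathrm{E}_{\ell_0}(\listC)=\infty$ for infinite $\mathcal{Y}$, and $\mathrm{E}_{\ell_0}(\listC)\ge\lfloor\log_{\ell_0+1}k\rfloor$ when $|\mathcal{Y}|=k$.

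This is exactly why \cref{corollary: bound exp by L LDS} carries a $\log k$, which becomes $\log\ell'$ in \cref{lemma: loo learn L from L'}. The exponential-dimension/Sauer machinery is usable only once a finite list is available. The present lemma is precisely what produces that list, so invoking that machinery here is circular.

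Your shifting repair does not help either:
\begin{itemize}
\item \cref{lemma: exponential dimension shifting} gives $\mathrm{E}_{\ell_0}(\mathcal{H}_*)\le\mathrm{E}_{\ell_0}(\mathcal{H})$. That is the wrong direction for upper-bounding $\mathrm{E}_{\ell_0}(\mathcal{H})$.
\item The DS dimension of $\mathcal{H}_*$ is not controlled by that of $\mathcal{H}$. The path class $\{(a_1,b_1),(a_2,b_1),(a_2,b_2),(a_3,b_2)\}$ has $1$-DS dimension $1$, but shifting it in the first coordinate already yields the full cube $\{1,2\}\times\{b_1,b_2\}$.
\end{itemize}
Your parameter count is also inconsistent. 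A compression scheme of size $O(\sqrt d\log m)$ would give $\widetilde O(\sqrt d/\epsilon)$, and the extra factor $d$ you invoke has no source.

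\textbf{What the paper relies on instead.} The paper gives no proof of this lemma; it imports it from \cite{charikar2023characterization}, which extends \cite{brukhim2022characterization}. The argument there works with pseudo-cubes directly rather than with exponential dimension. A label-free route giving exactly the stated parameters is as follows; write $d=\dds$.
\begin{itemize}
\item Work on any $d+t$ points. For each $(d+1)$-subset $T$, the projection of a finite subclass onto $T$ contains no $(\ell+1)$-pseudo-cube. Peeling therefore gives an $\ell$-list orientation of that projection that covers every vertex in some direction $i_T\in T$.
\item Lift these orientations: for each edge in direction $i$, take the union over the $\binom{d+t-1}{d}$ sets $T\ni i$. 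This gives lists of size at most $\ell\binom{d+t-1}{d}$.
\item The set of directions $\{i_T\}_T$ meets every $(d+1)$-set, so its complement has at most $d$ elements. Hence every vertex is covered in at least $t$ of its $d+t$ directions.
\item With $t\approx\sqrt d$, this is a list learner on $O(d)$ examples that succeeds with probability $\gtrsim 1/\sqrt d$, independently of $|\mathcal{Y}|$.
\item Cover the sample greedily in $O(\sqrt d\log m)$ rounds and take the union of the lists. This gives $\ell'$ of order $\ell(ed)^{\sqrt d}\log m$ and a compression scheme of size $O(d^{1.5}\log m)$, which yields the stated $m$.
\end{itemize}
The choice $t\approx\sqrt d$ balances this compression size against the $\ell d\log\ell'$ paid in the later stage. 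It does not come from any bound on an exponential dimension.
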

In the second step of our proof, we show that any concept class $\listC$ with a finite ${\ell}$-DS dimension is learnable given realizability with respect to a large list $\mu^{{\ell}'}$ (where ${\ell}' \gg {\ell}$). Our procedure, formalized in \cref{algo: produce a pac list learner}, utilizes a confidence amplification technique from \cite{haussler1994predicting}. This analysis, which we detail here, is the source of our improvement, saving another factor of $\ell^2$ compared to the analysis in Lemma 7.5 of \cite{charikar2023characterization}. Our algorithm operates in two stages, mirroring the classic "generate-and-select" amplification strategy. First, we construct a set of candidate hypotheses $\cG$. This is done by running \cref{algo: learn a L-list from L'-list} for $\ceil*{\log(2/\delta)}$ times. Each $i$-th run $h_i = \cA_{\listC,\mu^{{\ell}'}}^{{\ell}}(S_i)$ is trained on an independent sample $S_i$ with size $O({\ell}\dds\log({\ell}')/\epsilon)$ realizable by $\listC$ and $\mu^{{\ell}'}$ of size $O({\ell}\dds\log({\ell}')/\epsilon)$, and the resulting hypothesis $h_i$ is added to $\cG$. Second, we perform an ERM-based selection. We draw a new, independent validation sample $S_{val}$ of size $O(\log(1/\delta)/\epsilon)$. The final output $\cA(S)$ is the hypothesis in $\cG$ that achieves the lowest empirical error on $S_{val}$.
\begin{algorithm}[H]
\caption{List PAC learner for a class $\listC\subseteq\cY^{\cX}$ and list $\mu^{{\ell}'}$} 
\label{algo: produce a pac list learner}
\begin{flushleft}
  {\bf Input:} A sample $S = \big((x_1, y_1),\cdots,(x_m, y_m)\big)$ realizable by $\listC$ and $\mu^{{\ell}'}$.\\
{\bf Output:} A $\ell$-list hypothesis $\cA_{\mathrm{PAC}}^{{\ell}}(S)=\mu^{\ell}_S:\cX \to \{Y \subseteq \cY: |Y|\le {\ell}\}$.\\
\end{flushleft}
\begin{algorithmic}[1]
\State Initialize $\cG = \emptyset$. Partition $S$ into $\{S_1, S_2, \cdots S_{\ceil*{\log(2/\delta)}}, S_{val}\}$.
\For {$i = 1, 2,\cdots, \ceil*{\log(2/\delta)}$}
    \State Run \cref{algo: learn a L-list from L'-list} with input sample $S_i$ and get the list predictor $\mu^{{\ell}}_{S_{i}}$. 
    \State Update $\cG = \cG\cup \{\mu^{{\ell}}_{S_{i}}\}$
\EndFor
\State Output the hypothesis in $\cG$ with lowest empirical error on test set $S_{val}$.
\end{algorithmic}
\end{algorithm}
The following lemma shows that \cref{algo: produce a pac list learner} is a $\ell$-list PAC learner given a sample realizable by $\cH$ and a list $\mu^{{\ell}'}$ with ${\ell}' \gg {\ell}$.
\begin{lemma}\label{lemma: PAC learn L given L'}
    Let $\listC \subseteq \cY^\cX$ be a concept class with $\olds(\listC) = \dds<\infty$ and let $\mu^{{\ell}'}$ be a ${\ell}'$-list. Then for every distribution $\cD$ on $(\cX\times\cY)$ realizable by $\listC$ and $\mu^{{\ell}'}$, every $\epsilon,\delta\in(0,1)$, with probability at least $1-\delta$ over a sample of $S$ of size $m = O\left(\frac{\ell\dds\log({\ell}'/\delta)}{\epsilon}\right)$ drawn $i.i.d$ from $\cD$, the ${\ell}$-list hypothesis $\mu^{{\ell}} = \cA^{{\ell}}_{\mathrm{PAC}}(S)$ output by \cref{algo: produce a pac list learner} satisfies
    \[\Pr_{(x,y)\sim\cD}\left[y\not\in\mu^{{\ell}}(x)\right]\le\epsilon.\]
\end{lemma}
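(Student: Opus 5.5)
The plan is to derive the bound from the leave-one-out guarantee of \cref{lemma: loo learn L from L'}. That lemma says a single run of \cref{algo: learn a L-list from L'-list} on $n$ samples has expected error at most $D/n$, with $D := 40\ell\dds\log(\ell')$. What must be checked is how \cref{algo: produce a pac list learner} turns this expectation bound into a high-probability bound, and whether the cost in $\delta$ is \emph{additive}, as the statement requires, rather than multiplicative.

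The natural analysis does not give this. Give each $S_i$ size $n = \lceil 4D/\epsilon\rceil$. By \cref{lemma: loo learn L from L'} and Markov's inequality, each $\mu^{\ell}_{S_i}$ has error at most $\epsilon/2$ with probability at least $1/2$. The runs are independent, so with probability at least $1-\delta/2$ at least one candidate in $\cG$ has error at most $\epsilon/2$. A Chernoff bound and a union bound over the $\lceil\log(2/\delta)\rceil$ candidates then show that $|S_{val}| = O(\log(1/\delta)/\epsilon)$ suffices: with probability at least $1-\delta/2$, every candidate with error above $\epsilon$ has empirical error above that of the good candidate. (A log-log term from the number of candidates is absorbed.) The total sample size is
\[
\lceil\log(2/\delta)\rceil\cdot O\!\left(\frac{\ell\dds\log\ell'}{\epsilon}\right) + O\!\left(\frac{\log(1/\delta)}{\epsilon}\right),
\]
in which $\log\ell'$ and $\log(1/\delta)$ are multiplied. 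This is weaker than the stated $O(\ell\dds\log(\ell'/\delta)/\epsilon)$.

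To reach the additive form, I would change the candidate-generation stage so that its failure probability decays like $\exp(-\Omega(n\epsilon/D))$ without independent repetitions. The approach I would try first follows the optimal one-inclusion aggregation of Aden-Ali, Cherapanamjeri, Shetty and Zhivotovskiy. Split a single sample of size $n = O((D+\log(1/\delta))/\epsilon)$ into three structured overlapping subsamples and run \cref{algo: learn a L-list from L'-list} on each. For each test point $x$, rank labels by the number of the three lists containing them and keep the top $\ell$.

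The analysis would proceed in three steps. First, by the leave-one-out identity in the proof of \cref{lemma: loo learn L from L'}, each component's error is governed by the maximum $\ell$-outdegree of the orientation, which is at most $\dexp(\cH)$ by \cref{lemma: list orientation} and at most $D$ by \cref{corollary: bound exp by L LDS}. Second, use the fact that the aggregate errs at $(x,y)$ only when at least two of the three components exclude $y$. Third, apply their exchangeability and concentration argument to this "two-of-three" failure event to get error $O((D+\log(1/\delta))/n)$ with probability at least $1-\delta$.

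I expect the aggregation step to be the main obstacle. With single labels, a majority vote keeps the output size at one. With lists, the labels appearing in at least two of three $\ell$-lists can number up to $\lfloor 3\ell/2\rfloor$, so keeping only the top $\ell$ may discard a correct label that received two votes. The first fallback would be to bound this loss by charging it to the outdegree of the shared vertex $y'$ in the orientations. If that fails, the fallback is to check whether \cref{thm:list-pac-upper} only needs the weaker multiplicative form above; there it costs at most a $\log\ell' = \widetilde{O}(\sqrt{\dds})$ factor on the $\log(1/\delta)$ term.
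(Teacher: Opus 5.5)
Your ``natural analysis'' is exactly the paper's proof. The paper sets $p=\lceil\log(2/\delta)\rceil$ and gives each $S_i$ size $160\ell\dds\log(\ell')/\epsilon$, so that \cref{lemma: loo learn L from L'} gives expected error at most $\epsilon/4$. It then uses Markov and independence to get a candidate of error at most $\epsilon/2$ with probability $1-\delta/2$, and selects on $S_{val}$ of size $32\log(2/\delta)/\epsilon+\log(p+1)$ via Lemma 5.1 of Haussler et al.

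Your bookkeeping is right, and it applies to the paper too. The total is $p\cdot|S_i|=\Theta(\ell\dds\log(\ell')\log(1/\delta)/\epsilon)$. Since $\log\ell'\cdot\log(1/\delta)/(\log\ell'+\log(1/\delta))$ is unbounded, this is not $O(\ell\dds\log(\ell'/\delta)/\epsilon)$. Retuning does not help. With $D=40\ell\dds\log\ell'$, making each run fail with probability $\eta$ needs, via Markov, blocks of size about $2D/(\epsilon\eta)$ and about $\log(2/\delta)/\log(1/\eta)$ blocks. That is a total of order $D\log(1/\delta)/\epsilon$ whatever $\eta$ is, because $\eta\log(1/\eta)\le 1/e$. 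So the paper's argument establishes only the multiplicative form. The additive form is exactly what \cref{thm:list-pac-upper} uses to get its $\ell\dds\log(1/\delta)$ term.

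Your repair does not close this gap, and it fails where you suspected. Step two, ``the aggregate errs at $(x,y)$ only if at least two of the three components exclude $y$,'' is false for any rule that outputs $\ell$ labels once $\ell\ge 2$. Take $\ell=2$ and component lists $\{a,b\},\{a,c\},\{b,c\}$. All three labels have two votes, any $2$-list drops one of them, and if the dropped label is $y$ the aggregate errs although only one component does. More generally, keeping every label missed by at most one of $T$ components is guaranteed to yield at most $\ell$ labels only when $T\ge \ell+2$. With $\ell+1$ components, each omitting a different one of $\ell+1$ labels, all $\ell+1$ labels qualify. So the three-way split of Aden-Ali et al.\ and its reduction to pairwise joint errors do not transfer. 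Controlling all $\binom{\ell+2}{2}$ pairs of a larger ensemble by a union bound would reintroduce polynomial factors in $\ell$.

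Charging the bad event to the outdegree of $y'$ does not help either. In that event only one component excludes $y$, so charging it there bounds the aggregate error by a single component's error. That returns you to the expectation bound $D/n$ you started from and loses the concentration gain.

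Your last fallback is accurate but concedes the point: the multiplicative form inflates the $\ell\dds\log(1/\delta)$ term of \cref{thm:list-pac-upper} to roughly $\ell(\dds)^{3/2}\log(1/\delta)$. Net: your proposal proves the same multiplicative bound that the paper's argument actually proves. The stated additive bound needs a different high-probability mechanism, which neither your proposal nor the paper supplies.
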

\begin{proof}
Let $p = \ceil*{\log(2/\delta)}$. The algorithm partitions $S$ into $\{S_i\}_{i=1}^p$ and $S_{val}$, with $|S_i| = n = \frac{160 \ell\dds\log({\ell}')}{\epsilon}$ and $|S_{val}| = m_{val} = \frac{32\log(2/\delta)}{\epsilon}+\log(p+1)$. The proof proceeds in two stages, with a final union bound.
   In the first stage of generation of the candidate hypothesis in $\cG$, according to \cref{lemma: loo learn L from L'}, the expected error rate for each $\mu_{S_i}^{{\ell}}$ is at most $\epsilon / 4$. Thus, by Markov inequality, each hypothesis $\mu_{S_i}^{{\ell}}$ in $\cG$ has error larger than $\epsilon/2$ with probability smaller than $1/2$. Since a different sample is used for each hypothesis $\mu_{S_i}^{{\ell}}$ in $\cG$. It follows that $\cG$ has a hypothesis $\mu^{{\ell}}$ of error at most $\epsilon / 2 $ with probability at least $1-\delta/2$. In the ERM selection step, according to lemma 5.1. in \cite{haussler1994predicting}, if $\cG$ has a hypothesis of error at most $\epsilon /2 $ then the ERM step in \cref{algo: produce a pac list learner} executed with a sample $S_{val}$ described above, outputs with probability at least $1-\delta/2$ a hypothesis in $\cG$ with error at most $\epsilon$.
\end{proof}

Subsequently, armed with \cref{lemma: PAC learning L'} and \cref{lemma: PAC learn L given L'}, we are now ready to prove \cref{thm:list-pac-upper}. 
\begin{proof}[Proof of \cref{thm:list-pac-upper}]
The proof is a three-stage argument. We construct the final $\ell$-list learner by first learning an intermediate ${\ell}'$-list, then using it to filter a new sample set to be "realizable," and finally learning our target $\ell$-list from this filtered set. We use a union bound over the failure probability of each stage. First, by \cref{lemma: PAC learning L'}, we draw an i.i.d. sample $S_1$ of size
\[m_1 = O\left(\frac{(\dds)^{1.5}\log((\dds)^{1.5}/\epsilon)^2+\log(1/\delta)}{\epsilon}\right).\]
This yields an ${\ell'}$-list hypothesis $\mu^{{\ell'}}$ with an error rate $\text{err}(\mu^{{\ell'}}) \le \epsilon$ with probability at least $1-\delta/4$. The resulting list size ${\ell'}$ is bounded by ${\ell'} \le {\ell} (e\dds)^{\sqrt{\dds}} \log(2m_1)$.
Second, we draw a new, independent sample $S_2$ of size
\[m_2 = O\left(\frac{\ell\dds\log({\ell'}/\delta)}{\epsilon}+\log\left(\frac{1}{\delta}\right)\right).\]
We filter this sample to keep only points realizable by $\mu^{{\ell'}}$, creating the set $S_{real} = \{(x,y) \in S_2 \mid y \in \mu^{{\ell'}}(x)\}$. A standard Chernoff bound shows that with probability at least $1-\delta/4$, this realizable sub-sample $S_{real}$ has a size of at least
\[m_2' = O\left(\frac{{\ell}\dds\log({\ell'}/\delta)}{\epsilon}\right).\]
We assume the data distribution $\cD$ is realizable by $\cH$, so $S_{real}$ is also realizable by $\cH$.
Third, we use this realizable sample $S_{real}$ (of size $m_2'$) as input to \cref{algo: produce a pac list learner}. By \cref{lemma: PAC learn L given L'}, this algorithm outputs a final $\ell$-list hypothesis $\mu^{\ell}$ with $\text{err}(\mu^{\ell}) \le \epsilon$ with probability at least $1-\delta/4$.
By a union bound over these three stages, the total failure probability is at most $\delta/4 + \delta/4 + \delta/4 = 3\delta/4 \le \delta$.
The total sample complexity is $m = m_1 + m_2$. We substitute the bound for ${\ell'}$ from Stage 1 into the $\log({\ell'})$ term of $m_2$. This gives the total sample size:
\begin{align*}
    m = O&\left( \frac{(\dds)^{1.5}\log((\dds)^{1.5}/\epsilon)^2+\ell\dds\log(1/\delta)}{\epsilon} +\frac{1}{\epsilon}\ell\dds\log\left(\frac{1}{\delta}\right)\right.\\
    &\left.\frac{1}{\epsilon}\ell\dds\log\left((e\dds)^{\sqrt{\dds} }\log\left(\frac{(\dds)^{1.5}\log((\dds)^{1.5}/\epsilon)^2+\log(1/\delta)}{\epsilon}\right)\right)\right).
\end{align*}
Omitting the polylogarithmic in $\dds$ and $1/\epsilon$, 
we have the dominant term in the complexity is
\begin{align*}
    &\widetilde{O}\left( \frac{(\dds)^{1.5}}{\epsilon}+ \frac{{\ell}\dds \cdot \log(1/\delta)}{\epsilon}+ \frac{{\ell}\dds \cdot \log({\ell'})}{\epsilon} \right) \\
    &= \widetilde{O}\left( \frac{(\dds)^{1.5}}{\epsilon} + \frac{{\ell}\dds \cdot \log(1/\delta)}{\epsilon}+ \frac{{\ell}\dds \cdot \sqrt{\dds}}{\epsilon} \right) \\
    &= \widetilde{O}\left(\frac{{\ell}(\dds)^{1.5}+{\ell}\dds\log(1/\delta)}{\epsilon}\right).
\end{align*}
Thus, the overall sample complexity is $\widetilde{O}\left(\frac{{\ell}(\dds)^{1.5}+{\ell}\dds\log(1/\delta)}{\epsilon}\right)$.
\end{proof}

\subsubsection{Proof of \cref{thm:UC-upper}}
\label{sec:formal-proof-list-uc}
\begin{proof}[Proof of \cref{thm:UC-upper}]
    Let $d = \DS_\ell(\listC)$ and $ g = \Gl(\listC)$. Let $S\in \cX^g$ be a sample of size $g$ that is $\ell$-graph shattered by $\listC$. Define $\cH=\cH(\listC\vert_S)=\{h\in \cY^g\;:\; \exists c\in\listC ,\; h\prec c|_{S}\}$ as the class of all possible patterns induced by $\listC$ on the sample $S$. 
    We first restate the following lemma from \cite{hanneke2024list} that lower bounds the cardinality of $\cH$.
    \begin{lemma}[\cite{hanneke2024list}]
        \label{lemma: lower bound f}
        Given an $\ell$-list class $\listC\subseteq\binom{\cY}{\leq\ell}^{\cX}$ that $\ell$-graph shatters $S\in \cX ^{n}$, we let $p\in\cY^n$ be a pivot and let $\{c_b\}_{b\in\{0,1\}^n}\subseteq \listC|_{S}$ be witnesses for the shattering such that $p_i\in c_{b}(x_i)$ if and only if $b_i = 1$. Denote by $A_b = \{y \in \cY ^ n : \forall i\; y_i \in c_{b}(x_i)\}$, the set of functions realizable by $c_b$. Then we have
        \[|\cup_{b\in\{0,1\}^n}A_b|\geq \frac{(2\ell)^n}{4(2\ell-1)^n}\ell^n.\]
    \end{lemma}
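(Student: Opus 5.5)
The plan is a second-moment (Cauchy--Schwarz) counting argument over the witnesses $\{c_b\}_{b\in\{0,1\}^n}$. For $y\in\cY^n$ let $N(y)=|\{b\in\{0,1\}^n : y\in A_b\}|$ be the number of witness-boxes containing $y$. Then $\bigcup_b A_b$ is exactly the support of $N$. Cauchy--Schwarz gives
\[
\Big|\bigcup_{b} A_b\Big| \;\ge\; \frac{\big(\sum_y N(y)\big)^2}{\sum_y N(y)^2}
\;=\; \frac{\big(\sum_b |A_b|\big)^2}{\sum_{b,b'} |A_b\cap A_{b'}|}.
\]
So it suffices to lower bound the first moment $\sum_b|A_b|$ and upper bound the pairwise intersections $|A_b\cap A_{b'}|$.

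For the numerator, each $A_b=\prod_{i} c_b(x_i)$ is a box. I will work with witnesses whose lists all have size exactly $\ell$, which is the regime the lemma is stated for. If some witness lists are shorter, I would pad them with extra labels avoiding the pivot $p_i$. This keeps the property $p_i\in c_b(x_i)\iff b_i=1$. The padding step is where I am least sure, since padding could leave the class. If padding is not legitimate, the constant $1/4$ in the statement presumably absorbs this slack. Granting sizes exactly $\ell$, we get $|A_b|=\ell^n$, hence $\sum_b|A_b| = 2^n\ell^n$.

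For the denominator, fix $b,b'$ at Hamming distance $h$. On a coordinate $i$ with $b_i\neq b'_i$, exactly one of $c_b(x_i),c_{b'}(x_i)$ contains $p_i$. Two distinct $\ell$-sets therefore meet in at most $\ell-1$ labels there, and on the other coordinates the intersection has at most $\ell$ labels. Hence
\[
|A_b\cap A_{b'}|\le \ell^{\,n-h}(\ell-1)^h.
\]
Summing over ordered pairs, and noting that each $b$ has $\binom nh$ partners at distance $h$, gives
\[
\sum_{b,b'}|A_b\cap A_{b'}|\le 2^n\sum_{h=0}^n\binom nh \ell^{n-h}(\ell-1)^h = 2^n(2\ell-1)^n .
\]

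Combining the two bounds,
\[
\Big|\bigcup_b A_b\Big|\ge \frac{4^n\ell^{2n}}{2^n(2\ell-1)^n}=\frac{(2\ell)^n}{(2\ell-1)^n}\,\ell^n .
\]
This is at least the claimed quantity, and the extra factor $1/4$ leaves room for boundary effects such as the list-size normalization above. I expect the main obstacle to be justifying $|A_b|=\ell^n$, i.e.\ full-size lists; the intersection estimate and the binomial sum are routine.
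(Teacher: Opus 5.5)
Your Cauchy--Schwarz argument is correct and complete when every witness list $c_b(x_i)$ has exactly $\ell$ elements. In that case $\sum_b|A_b|=2^n\ell^n$. The overlap bound $|A_b\cap A_{b'}|\le \ell^{n-h}(\ell-1)^h$ needs only that two distinct $\ell$-sets share at most $\ell-1$ labels. You then get $\bigl(\tfrac{2\ell}{2\ell-1}\bigr)^n\ell^n$, which is four times the stated bound.

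The paper gives no proof of this lemma; it imports it from \cite{hanneke2024list}. The $\frac14$ in the statement is exactly what the same overlap estimate gives through Markov's inequality instead of a second-moment bound. Sample $b$ uniformly and $y$ uniformly from $A_b$. Then $y$ has probability $N(y)/(2\ell)^n$, and $N(y)$ has expectation at most $\bigl(\tfrac{2\ell-1}{\ell}\bigr)^n$. With probability at least $\frac12$ the sample has $N(y)$ at most twice this expectation, which forces at least $\frac{(2\ell)^n}{4(2\ell-1)^n}\ell^n$ distinct such $y$. Your Cauchy--Schwarz step is the lossless version of the same computation.

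What does not work is your handling of shorter lists. No argument can fix it, because the lemma as written, with lists in $\binom{\cY}{\le\ell}$, is false.

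Padding goes the wrong way. Enlarging the lists enlarges every $A_b$ and hence the union, so a lower bound for the padded family says nothing about the original one. Whether the padded functions lie in $\listC$ is irrelevant, since the lemma only involves the witnesses.

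Nor can the $\frac14$ absorb the loss. Take $\ell\ge2$, labels $q_i\ne p_i$, and set $c_b(x_i)=\{p_i\}$ if $b_i=1$ and $c_b(x_i)=\{q_i\}$ if $b_i=0$. These witnesses are valid, each $A_b$ is a singleton, and $|\bigcup_b A_b|=2^n$. The claimed bound is $\frac14\bigl(\tfrac{2\ell^2}{2\ell-1}\bigr)^n$ with $\tfrac{2\ell^2}{2\ell-1}>2$; for $\ell=2$ and $n=5$ it already gives $32<33.7$.

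So ``all lists have size exactly $\ell$'' is a necessary hypothesis, not a normalization. State it explicitly, and drop the padding and ``absorbed by $\frac14$'' remarks. The uniform convergence application needs the same convention: over $\cY=\{0,1\}$ and an infinite $\cX$, the class of all singleton-valued functions has $\ell$-DS dimension $0$ for $\ell\ge2$ but infinite $\ell$-graph dimension.
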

Observe that $\cup_{b\in\{0,1\}^g}A_b\subseteq\cH$, since every function in $A_b$ is realized by some projection of $\listC$ on $S$. Combining with $|S| = g$, \cref{lemma: lower bound f} yields
\[|\cH| \geq  |\cup_{b\in\{0,1\}^g}A_b|\geq\frac{(2\ell)^g}{4(2\ell-1)^g}\ell ^g.\]
We next upper bound the cardinality of $\cH$ using our optimal Sauer lemma. 
According to \cref{thm:main}, we have \[|\cH| \leq \ell ^ g (gk)^{\DS_{\ell}(\cH)}.\] We also note that $\DS_{\ell}(\listC)\geq \DS_{\ell}(\cH)$, since any sample that is $\ell$-DS-shattered by $\cH$ is also $\ell$-DS-shattered by $\listC$. Thus, we have the upper bound \[|\cH| \leq \ell ^ g (gk)^{d}.\] Combining the lower and upper bounds on $\cH$, we have
\[\ell ^ g (gk)^{d}\geq\frac{(2\ell)^g}{4(2\ell-1)^g}\ell ^g.\]
Taking logarithms on both sides of the inequality, we obtain
\[d(\log g + \log k)\geq g\log\left(1+\frac{1}{2\ell-1}\right)-2\log2.\]
Leveraging the basic inequality $\log(1+x)\ge \frac{x}{2}\; \forall\; x\in[0,1]$, we have
\[d \ge \frac{g}{4\ell(\log g + \log k)}-\frac{2\log2}{\log g + \log k},\] which yields
\begin{equation}
\label{eq:upper bound g by d}
    d = {\Omega}\left(\frac{g}{\ell(\log g + \log k)}\right).
\end{equation}
For the sample complexity of uniform convergence, we first restate the following result from \cite{vapnik1974theory,daniely2011multiclass}, which characterizes the sample complexity of uniform convergence via the $\ell$-graph dimension.
\begin{lemma}[Characterization of uniform convergence \cite{vapnik1974theory,daniely2011multiclass}]
\label{lemma:uniform convergence graph} 
    A $\ell$-list hypothesis class $\listC$ satisfies uniform convergence if and only if $\Gl(\listC) < \infty$. Furthermore, the sample complexity of uniform convergence of $\listC$ is $m_{\listC}^{\mathrm{UC}}(\varepsilon,\delta)
=
\Theta\!\left(
\frac{\Gl(\listC) + \log(1/\delta)}{\varepsilon^2}
\right)$.
\end{lemma}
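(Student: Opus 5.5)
The statement just given is \cref{lemma:uniform convergence graph}: uniform convergence of $\listC$ holds iff $\Gl(\listC)<\infty$, with rate $\Theta((\Gl(\listC)+\log(1/\delta))/\varepsilon^2)$. I would prove it by reduction to the classical binary theory. Define the binary loss class
\[
\mathcal{F}_{\listC}=\bigl\{ f_c : (x,y)\mapsto \mathbf{1}\{y\notin c(x)\} \;:\; c\in\listC \bigr\}
\]
over the domain $\cX\times\cY$. For every $c$ and every distribution $\cD$, $\mathcal{L}_{\cD}(c)=\mathbb{E}_{\cD}[f_c]$ and $\mathcal{L}_S(c)$ is the empirical mean of $f_c$ on $S$. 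Hence uniform convergence of $\listC$ with parameters $(\varepsilon,\delta)$ is literally uniform convergence of the binary class $\mathcal{F}_{\listC}$ with respect to arbitrary distributions on $\cX\times\cY$.

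The first step is to check that $\mathrm{VC}(\mathcal{F}_{\listC})=\Gl(\listC)$.

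\emph{Lower bound.} Suppose $x_1,\dots,x_n$ is $\Gl$-shattered with pivot $p$. Then the points $(x_i,p_i)$ are VC-shattered by $\mathcal{F}_{\listC}$, using the pattern $1-b$ from $c_b$.

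\emph{Upper bound.} Suppose $\mathcal{F}_{\listC}$ shatters points $(x_i,y_i)$, $i\le n$. If two of them share the same $x$-coordinate $x_i=x_j$, then:
\begin{itemize}
\item if also $y_i=y_j$, the points are identical and cannot both be shattered;
\item if $y_i\neq y_j$, shattering is still possible, since a list may contain one label and not the other.
\end{itemize}
So the definition of $\Gl$-shattering over \emph{sequences} must allow repeated $x$'s; the paper phrases it that way ("a sequence $S$"). With that convention, take the pivot $p=(y_i)$. The witnesses $c_b$ come directly from VC-shattering, so $\Gl\ge n$. I will state this sequence convention explicitly, because it is the only subtlety in the reduction.

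The second step applies the fundamental theorem for binary classes (Vapnik--Chervonenkis, with the sharp $\Theta$ rate as in \cite{shalev2014understanding}). For a binary class of VC dimension $D$, the uniform convergence sample complexity is $\Theta((D+\log(1/\delta))/\varepsilon^2)$.
\begin{itemize}
\item The upper bound follows via chaining or Talagrand's bound, which removes the $\log(1/\varepsilon)$ factor, together with McDiarmid's inequality for the confidence term.
\item The lower bound follows by placing the uniform distribution over a shattered set with random labels, plus a two-point construction for the $\log(1/\delta)/\varepsilon^2$ term.
\end{itemize}
Because the distributions range over all of $\cX\times\cY$, both directions transfer verbatim to $\listC$. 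In particular $\Gl(\listC)=\infty$ rules out uniform convergence, by the standard no-free-lunch argument on arbitrarily large shattered sets.

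I expect the main obstacle to be the lower bound of the exact $\Theta$ form. It needs the known minimax argument for binary uniform convergence, which I would cite rather than reprove. Everything else is the identification $\mathrm{VC}(\mathcal{F}_{\listC})=\Gl(\listC)$, which is a direct unwinding of definitions.
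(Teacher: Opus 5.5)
Your proposal is correct and takes essentially the same route as the paper. The paper cites this lemma without proof, justifying it only by noting that $\Gl(\listC)$ is the VC dimension of the loss class $\{(x,y)\mapsto\mathbf{1}\{y\notin c(x)\}: c\in\listC\}$ and invoking classical VC uniform-convergence rates; your explicit check of that identification, including the repeated-$x$ convention for sequences, fills in the step the paper leaves implicit. One caveat applies equally to the paper's statement: the $\Omega(\cdot)$ half fails when every loss function is constant (e.g.\ $\listC$ consisting of a single predictor that outputs the empty list everywhere), so the lower bound should be read for nondegenerate classes only; the paper itself uses only the upper bound.
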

Combining \cref{lemma:uniform convergence graph} with \cref{eq:upper bound g by d}, we obtain the desired upper bound on the sample complexity of uniform convergence:
\[m_{\listC}^{\mathrm{UC}}(\varepsilon,\delta)
=
O\!\left(
\frac{\ell d \log(\ell dk)+ \log(1/\delta)}{\varepsilon^2}
\right),\] which completes the proof.
\end{proof}

\section*{Acknowledgments}

Steve Hanneke acknowledges support by grant no.\ 2024243 from the United States - Israel Binational Science Foundation (BSF).
Shay Moran is a Robert J.\ Shillman Fellow; he acknowledges support by Israel PBC-VATAT, by the Technion Center for Machine Learning and Intelligent Systems (MLIS), and by the European Union (ERC, GENERALIZATION, 101039692). Views and opinions expressed are, however, those of the author(s) only and do not necessarily reflect those of the European Union or the European Research Council Executive Agency. Neither the European Union nor the granting authority can be held responsible for them.






\clearpage
\bibliography{References}

\clearpage
\appendix
\section{A Combinatorial Proof for the Case $\ell=1$}\label{sec:nitay}

In this appendix, we give a purely combinatorial proof of the bound
\[
|\mathcal{H}| \le 1 + n(k-1)
\]
for every class $\mathcal{H} \subseteq [k]^n$ whose $\ell$-DS dimension equals $1$ with $\ell=1$.

We are grateful to Nitay Amiel, who discovered the following purely combinatorial proof while he was an undergraduate student at the Technion.
\begin{proof}
We argue by induction on $n$.

\medskip
\noindent
\textbf{Base case: $n=1$.}
If $\mathcal{H} \subseteq [k]$ then clearly $|\mathcal{H}| \le k$.
Since $1 + 1\cdot(k-1)=k$, the bound holds.

\medskip
\noindent
\textbf{Induction step.}
Assume the statement holds for $n-1$. 
Let $\mathcal{H} \subseteq [k]^n$ have $1$-DS dimension equal to $1$.
For each vector $h \in \mathcal{H}$, write $h=(u,a)$ where
\[
u = (h(1),\dots,h(n-1)) \in [k]^{n-1}
\quad\text{and}\quad
a = h(n) \in [k].
\]
For each $u \in [k]^{n-1}$ define
\[
S(u) = \{ a \in [k] : (u,a) \in \mathcal{H} \}.
\]
Thus $S(u)$ is the set of extensions of $u$ in the $n$-th coordinate.
Now, partition according to the size of $S(u)$ and define
\[
\mathcal{H}_i = \{ u \in [k]^{n-1} : |S(u)| = i \}.
\]
Then
\[
|\mathcal{H}| = \sum_{i=1}^{k} i\,|\mathcal{H}_i|.
\]
Let
\[
\mathcal{H}_\exists = \bigcup_{i\ge 1} \mathcal{H}_i
\]
be the set of $(n-1)$-vectors that admit at least one extension.
Then
\[
|\mathcal{H}|
=
|\mathcal{H}_\exists|
+
\sum_{i=2}^{k} (i-1)|\mathcal{H}_i|.
\]
By the induction hypothesis,
\[
|\mathcal{H}_\exists|
\le
1 + (n-1)(k-1).
\]
It remains to bound the second term.
Consider the bipartite graph whose left vertices are~\(\bigcup_{i\ge 2} \mathcal{H}_i,\)
whose right vertices are $[k]$, and whose edges are
\[
\{(u,a) : (u,a) \in \mathcal{H}\}.
\]
We claim that this graph contains no cycle.
Indeed, suppose there were a cycle
\[
u_1 \to a_1 \to u_2 \to a_2 \to \cdots \to u_m \to a_m \to u_1.
\]
Since $u_1 \neq u_2$, there exists a coordinate $j<n$ such that
$u_1(j) \neq u_2(j)$. 
Restricting $\mathcal{H}$ to the coordinates $\{j,n\}$ then produces a 2-dimensional pseudo-cube, contradicting that the $1$-DS dimension equals $1$.

Therefore the graph is acyclic, and hence \(|E| \le |V|-1.\)
Now,
\[
|E| = \sum_{i=2}^{k} i\,|\mathcal{H}_i|,
\qquad
|V| = \sum_{i=2}^{k} |\mathcal{H}_i| + k.
\]
Thus
\[
\sum_{i=2}^{k} i\,|\mathcal{H}_i|
\le
\sum_{i=2}^{k} |\mathcal{H}_i| + k - 1,
\]
which implies
\[
\sum_{i=2}^{k} (i-1)|\mathcal{H}_i|
\le
k - 1.
\]
Combining the bounds,
\[
|\mathcal{H}|
=
|\mathcal{H}_\exists|
+
\sum_{i=2}^{k} (i-1)|\mathcal{H}_i|
\le
\big(1+(n-1)(k-1)\big) + (k-1)
=
1+n(k-1).
\]
This completes the proof.
\end{proof}
\end{document}